%
\documentclass[runningheads]{llncs}
\usepackage[T1]{fontenc}
%
\usepackage{graphicx}
%
\usepackage{multirow}
\usepackage{booktabs}
\usepackage{algorithm}
\usepackage{algpseudocode}
\usepackage[misc]{ifsym}
\usepackage{arydshln}
\usepackage{color}
\usepackage{xfrac}
\usepackage{colortbl}
\usepackage[pagebackref=true,breaklinks=true,colorlinks=true,bookmarks=false,citecolor=blue,urlcolor=blue,linkcolor=blue]{hyperref}

\usepackage{booktabs}
\usepackage{multirow}
\usepackage[table,xcdraw]{xcolor}
\newcommand{\Tau}{\mathcal{T}}
\usepackage{amssymb}
\usepackage{amsfonts}
\usepackage{amsmath} 
\usepackage{bm}

\newcommand{\E}{\mathbb{E}}

\begin{document}
\title{Client-Level Differential Privacy via Adaptive Intermediary in Federated Medical Imaging}
\titlerunning{Client-Level Differential Privacy  in Federated Medical Imaging}
\author{
Meiqui Jiang\inst{1} 
\and
Yuan Zhong\inst{1} 
\and
Anjie Le\inst{1}
Xiaoxiao Li\inst{2}
\and 
Qi Dou\textsuperscript{1(\Letter)}
}

\authorrunning{M. Jiang et al.}

\institute{
Dept. of Computer Science and Engineering, The Chinese University of Hong Kong
\and
Dept. of Electrical and Computer Engineering, The University of British Columbia
}
\maketitle              
\begin{abstract}
Despite recent progress in enhancing the privacy of federated learning (FL) via differential privacy (DP), the trade-off of DP between privacy protection and performance is still underexplored for real-world medical scenario. In this paper, we propose to optimize the trade-off under the context of client-level DP, which focuses on privacy during communications. However, FL for medical imaging involves typically much fewer participants (hospitals) than other domains (e.g., mobile devices), thus ensuring clients be differentially private is much more challenging. To tackle this problem, we propose an adaptive intermediary strategy to improve performance without harming privacy. Specifically, we theoretically find splitting clients into sub-clients, which serve as intermediaries between hospitals and the server, can mitigate the noises introduced by DP without harming privacy. Our proposed approach is empirically evaluated on both classification and segmentation tasks using two public datasets, and its effectiveness is demonstrated with significant performance improvements and comprehensive analytical studies.
Code is available at: \url{https://github.com/med-air/Client-DP-FL}.

\keywords{Federated Learning \and Client-level Differential Privacy \and Medical Image Analysis.}
\end{abstract}
\section{Introduction}

Differential privacy (DP) has emerged as a promising technique to safeguard the privacy of sensitive data in federated learning (FL)~\cite{adnan2022federated,kairouz2019advances,kaissis2021end,rieke2020future,ziller2021differentially}, offering privacy guarantees in a mathematical format~\cite{dwork2006calibrating,mironov2017renyi,zheng2021federated}. However, introducing noise to ensure DP often comes at the cost of performance. Some recent studies have noticed that the noise added to the gradient impedes optimization~\cite{de2022unlocking,kim2021federated,papernot2021tempered}. For critical medical applications requiring low error tolerance, such performance degradation makes the rigorous privacy guarantee diminish~\cite{dayan2021federated,liu2022medical}. Therefore, it is imperative to maintain high performance while enhancing privacy, i.e., optimizing the privacy-performance trade-off. Unfortunately, despite its significance, such trade-off optimization in FL has not been sufficiently investigated to date.

Several studies have examined the trade-off in the centralized scenario. For instance, Li et al.~\cite{li2022private} proposed enhancing utility by leveraging public data or data statistics to estimate gradient geometry. Amid et al.~\cite{amid2022public} utilized the loss on public data as a mirror map to improve performance. Li et al.~\cite{li2023differentially} suggested constructing less noisy preconditioners using historical gradients. In contrast to these studies, we concentrate on promoting the trade-off in FL, where public dataset is limited and sharing side information may not be feasible~\cite{kairouz2019advances,rieke2020future}. 
Specifically, we aim to ensure that clients are differentially private. Our objective is not to protect a single data point, but rather to achieve that a learned model does not reveal whether a client participated in decentralized training. This ensures that a client's entire dataset is safeguarded against differential attacks from third parties. 
This is particularly crucial in medical imaging, where sensitive patient information is typically kept within each hospital. 
Nevertheless, in medical imaging, the number of participants (silos) is usually much smaller than in other domains, such as mobile devices~\cite{kairouz2019advances}. This cross-silo situation necessitates adding a considerable amount of noise to protect client privacy, making the optimization of the trade-off uniquely challenging \cite{liu2022on}.

To improve the trade-off of privacy protection and performance, the key point is to mitigate the noise added to the client during gradient updates. Our idea is inspired by the observation in DP-FedAvg~\cite{brendan2018learning}, which suggests that the utility of DP can be improved by utilizing a sufficiently large dataset with numerous users. Through an analysis of the DP accountant, we identified that the noise is closely related to the gradient clip bound and the number of participants. In this regard, we propose to split the original client into disjoint sub-clients, which act as intermediaries for exchanging information between the hospital and the server. This strategy increases the number of client updates against queries, thereby consequently reducing the magnitude of noise. However, finding an optimal splitting is not straightforward due to the non-identical nature of data samples. Splitting a client into more sub-clients may increase the diversity of FL training, which can adversely harm the final performance. Thus, there is a trade-off between noise level and training diversity. Our objective is to explore the relationships among clients, noise effects, and training diversities to identify a balance point that maximizes the trade-off between privacy and performance.

In this paper, we present a novel adaptive intermediary method to optimize the privacy-performance trade-off. Our approach is based on the interplay relationships among noise levels, training diversities, and the number of clients. Specifically, we observe a reciprocal correlation between the noise level and the number of intermediaries, as well as a linear correlation between the training diversity and the intermediary number. To determine the optimal number of intermediaries, we introduce a new term called intermediary ratio, which quantifies the ratio of noise level and training diversity. 
Our theoretical analysis demonstrates that splitting the original clients into more intermediaries achieves DP with the same privacy budget and DP failure probability. 
Furthermore, we show that when sample-level DP and client-level DP have equivalent noise levels, the variance of the difference between noisy and original model diverges exponentially with more training steps, leading to poor performance.
We evaluate our method on both classification and segmentation tasks, including the intracranial hemorrhage diagnosis with 25,000 CT slices, and the prostate MRI segmentation with heterogeneous data from different hospitals. Our method consistently outperforms various DP optimization methods on both tasks and can serve as a lightweight add-on with good compatibility. In addition, we conduct comprehensive analytical studies to demonstrate the effectiveness of our method.

\section{Method}
\subsection{Preliminaries}
In this work, we consider client-level differential privacy. We first introduce the definition of DP as follows:

\begin{figure*}[t]
\centering
\includegraphics[width=0.99\columnwidth]{./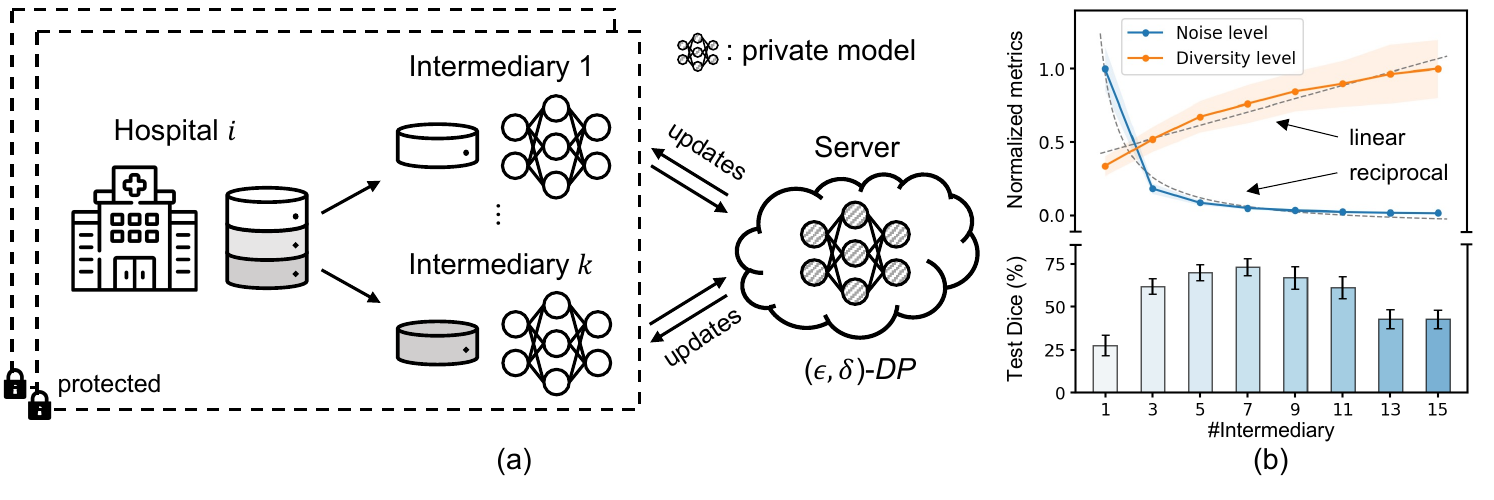}
\caption{(a) Overview of our intermediary strategy, which protects participating hospitals with superior privacy-performance trade-offs. The server aggregates local models from non-overlapping intermediaries with DP guarantees. (b) Continuously splitting intermediaries may not continuously improve performance, as it reduces gradient noises in a reciprocal manner, but also increases gradient diversity or heterogeneity linearly.}

\label{fig:method}
\end{figure*}

\begin{definition}{($(\epsilon, \delta)$-Differential Privacy~\cite{dwork2006calibrating,dwork2014algorithmic})}
For a randomized learning mechanism $M$: $\mathcal{X} \rightarrow \mathcal{R}$, where $\mathcal{X}$ is the collection of datasets it can be trained on, and $\mathcal{Y}$ is the collection of model it can generate, it is $(\epsilon, \delta)$-DP if:
$$(\forall S \subseteq \mathcal{R})(\forall D,D'\in \mathcal{X}, D\sim D') 
\operatorname{Pr}[M(D\-) \in S] \leq \exp 
(\varepsilon) \cdot 
\operatorname{Pr}\left[M\left(D^{\prime}\right) \in 
S\right]+\delta,$$
\end{definition}
where $\epsilon$ denotes the privacy budget, and $\delta$ represents the probability that $\epsilon$-DP fails in this mechanism. Note that the smaller the $\epsilon$ value is, the more private the mechanism is. 
Our aim of applying DP is to protect the collection of ``datasets'' $\mathcal{X}$, which are client model updates in every communication round in the context of FL.
The protection can be done by incorporating a DP-preserving randomization mechanism into the learning process. One commonly used method is the Gaussian mechanism, which involves bounding the contribution ($l_2$-norm) of each client update followed by adding Gaussian noise proportional to that bound onto the aggregate \cite{brendan2018learning}. Specifically, 
suppose there are $N$ clients, denote the gradients of each client as $\Delta_i$, the server model $\theta_{t+1}$ at round $t+1$ is updated by adding the Gaussian mechanism approximating the sum of updates as follows:

\begin{equation}
\label{eq:dpfedavg}
    \theta_{t+1}\gets\theta_t+\frac{1}{N}\left( \sum\nolimits_{i\in[N]} \Delta_i / \max(\frac{\|\Delta_i\|_2}{C},1) +\mathcal{N}(\mathbf{0},z^2C^2\mathbf{I}) \right),
\end{equation}
where $C$ is the gradient clipping threshold, and $z$ is the noise multiplier determined by the privacy accountant with given $\epsilon$, $\delta$, and training steps. The noise multiplier $z$ indicates the amount of noise required to reach a particular privacy budget. To privatize the participation of clients in FL, the noise added for client-level DP typically correlates with the number of clients. This incurs a large magnitude of noise in cross-silo FL in the medical field, which can significantly deteriorate the final server model performance.

\subsection{Adaptive Intermediary for Improving Client-Level DP}

The key to optimizing the privacy-performance trade-off lies in mitigating the effects of noise without compromising privacy protection. Based on the noise calculation in Eq.~(\ref{eq:dpfedavg}), we propose to study the final effects of noise on the server model, which can be denoted as $\zeta \sim  \mathcal{N}(0,\sigma^2\mathbf{I})$, where $\sigma = \sfrac{zC}{N}$. Note that the final noise ($\zeta$) is determined by $\sigma$, which relates to the noise multiplier $z$, clip threshold $C$, and the number of clients $N$. In DP, the clip threshold and the noise multiplier are usually pre-assigned. Therefore, the noise level can be reduced by increasing the number of clients $N$. To this end, we propose to reduce the noise by splitting the original clients into non-overlapping sub-clients, which serve as intermediaries to communicate with the server (see Fig.~\ref{fig:method}~(a)). We validate our hypothesis by studying the feasibility and analyzing the relationships between the intermediary number, noise, and performance.

\subsubsection{Feasibility.} We demonstrate the feasibility by showing the use of intermediary preserves privacy. For $\mathcal{X}$ the collection of possible datasets from extant clients, denote $D_i\in\mathcal{X}$ the dataset of client $i$, we randomly split $D_i$ into $v$ disjoint subsets $D_{i,1}, ..., D_{i,v}$, so that $\sqcup_j D_{i,j}=D_i$. We define the dataset $D_{i,j}$ of client $i$ as the intermediary $j$. Then we show that partitioning extant clients into multiple intermediaries is capable of maintaining DP. Denote the collection of all possible datasets formed by the intermediaries as $\mathcal{Y}$, and note that $\mathcal{X}\subseteq\mathcal{Y}$. We have:
\begin{theorem}

\label{thm:dp_split}
If a randomized learning mechanism $\mathcal{M}:\mathcal{X}\rightarrow \mathcal{R}$ is $(\epsilon,\delta)-DP$, then its induced mechanism $\Tilde{\mathcal{M}}:\mathcal{Y}\rightarrow \mathcal{R}$ is also $(\epsilon,\delta)$-DP. 

\end{theorem}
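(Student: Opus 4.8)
The plan is to reduce the differential privacy of the induced mechanism $\tilde{\mathcal{M}}$ to a single-round sensitivity computation, exploiting that $\tilde{\mathcal{M}}$ is \emph{structurally identical} to $\mathcal{M}$: it runs the same Gaussian-mechanism update of Eq.~(\ref{eq:dpfedavg}), with the only change being that the role of a ``client unit'' is now played by an intermediary $D_{i,j}$ rather than a whole client $D_i$. Accordingly, the neighboring relation on $\mathcal{Y}$ is ``differing by one intermediary,'' and the quantity actually released in each round is the noised aggregate $q(\cdot)=\sum \Delta/\max(\|\Delta\|_2/C,1)+\mathcal{N}(\mathbf{0},z^2C^2\mathbf{I})$ summed over the present units. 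First I would make this identification precise, so that proving $(\epsilon,\delta)$-DP for $\tilde{\mathcal{M}}$ becomes a statement about the Gaussian mechanism applied to $q$ under the intermediary-level neighboring relation.

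The key step is to bound the per-round $\ell_2$-sensitivity of the clipped sum under a one-intermediary change and to show it equals the one-client sensitivity used by $\mathcal{M}$. Since each intermediary $D_{i,j}$ is a disjoint subset of $D_i$ and contributes a single clipped update of norm at most $C$, inserting or deleting one intermediary perturbs the aggregate by at most $C$ in $\ell_2$ norm, exactly the sensitivity that $C$ enforces for whole clients in $\mathcal{M}$. Because the clip threshold $C$ and noise multiplier $z$ are held fixed when splitting, the Gaussian noise $\mathcal{N}(\mathbf{0},z^2C^2\mathbf{I})$ remains calibrated to this same sensitivity.

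I would then conclude by invoking the Gaussian-mechanism guarantee together with the privacy accountant that fixes $z$: for a query of sensitivity $C$ noised at scale $zC$, the accountant certifies $(\epsilon,\delta)$-DP over the given number of training steps, and this certificate depends only on $(C,z,\delta)$ and the step count, not on the number of units. Since all of these are unchanged between $\mathcal{M}$ and $\tilde{\mathcal{M}}$, the composed guarantee is the same $(\epsilon,\delta)$. The $1/N$ averaging factor in Eq.~(\ref{eq:dpfedavg}) is mere post-processing and does not affect privacy, so the improvement in effective noise $\sigma=zC/N$ is orthogonal to the DP bound.

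The main obstacle is conceptual rather than computational: because $\mathcal{X}\subseteq\mathcal{Y}$, requiring $\tilde{\mathcal{M}}$ to be DP on the \emph{larger} domain $\mathcal{Y}$ is strictly stronger than the hypothesis on $\mathcal{X}$, so one cannot appeal to any closure-under-restriction property and must genuinely verify every new neighboring pair introduced by splitting. The care therefore lies in (i) confirming that the sensitivity bound $C$ holds uniformly over \emph{all} intermediary-level neighbors, including those that correspond to no whole client, and (ii) ensuring that the composition (and any subsampling) bookkeeping in the accountant is invariant under the split, so that the same $z$ legitimately yields the same $(\epsilon,\delta)$.
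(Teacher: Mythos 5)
Your argument is sound for the concrete mechanism the paper actually deploys, but it takes a genuinely different route from the paper's proof, and the difference matters for what is actually being proved. The paper's proof is a one-line abstract reduction: given any pair of intermediary-level neighbors $d\sim d'$ in $\mathcal{Y}$ differing in one sub-client $i$ versus $i'$, regroup the sub-clients into their parent clients; the client $I$ containing $i$ becomes a surrogate client $I'$ containing $i'$, every other client is unchanged, so the regrouped datasets $D,D'$ are client-level neighbors in $\mathcal{X}$, and the hypothesis $\operatorname{Pr}[\mathcal{M}(D)\in S]\le e^{\epsilon}\operatorname{Pr}[\mathcal{M}(D')\in S]+\delta$ transfers verbatim to $\tilde{\mathcal{M}}$ on $(d,d')$. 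This is exactly the resolution of what you call the ``main obstacle'': although $\mathcal{X}\subseteq\mathcal{Y}$ creates new neighboring pairs, every such pair projects onto an old one, so the hypothesis already covers them all and no new verification is needed. Your proof instead never uses the hypothesis that $\mathcal{M}$ is DP and re-derives privacy for $\tilde{\mathcal{M}}$ from first principles---per-unit sensitivity still $C$ after splitting, noise still $zC$, accountant depending only on $(z,\delta)$ and the step count. That calculation is correct (up to the usual factor of $2$ between replacement and add/remove adjacency, which affects $\mathcal{M}$ and $\tilde{\mathcal{M}}$ identically and is therefore harmless), and it buys something the paper's proof does not: an explanation of \emph{why} the noise-to-sensitivity ratio, hence the privacy certificate, is invariant under splitting while the effective noise $zC/N$ on the averaged update shrinks. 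But it proves strictly less than the stated theorem, which quantifies over arbitrary $(\epsilon,\delta)$-DP mechanisms $\mathcal{M}$ rather than only the Gaussian mechanism of Eq.~(\ref{eq:dpfedavg}); for the general statement you need the regrouping reduction. If you keep your route, you should also state explicitly that client subsampling is absent (or redefine the sampling probability per intermediary), since otherwise the accountant's composition bookkeeping is not invariant under the split as your last step assumes.
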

This indicates that partitioning the original client into intermediaries keeps the same DP regime. The proof can be found in Appendix~\ref{app:secB}. We also analyze the reverse relation in the appendix section to complete the overall relationship.

\subsubsection{Privacy-performance trade-off analysis.}
With the above basis, we further investigate the privacy-performance trade-off by varying the number of intermediaries. According to the noise calculation of $\sigma = \sfrac{zC}{N}$, we can reduce noise by splitting clients into intermediaries to increase $N$. However, increasing the number of intermediaries causes each intermediary to hold fewer samples. This may affect the aggregation direction and harms final performance consequently. There is a trade-off behind intermediary splitting.
To investigate the trade-off, we design and study two highly related metrics, i.e., noise level $\xi$ and client update diversity level $\varphi$. Denoting clipped gradients as $\hat{\Delta}_i$, we define the noise level and diversity level as:
\begin{equation}
    \xi = \frac{\|\zeta\|}{\|\sum\nolimits_{i\in N}\hat{\Delta_i}\|_2}, \quad \varphi = \frac{\sum_{i \in N}\left\|\Delta_{i}\right\|_2}{\|\sum_{i \in N} \hat{\Delta_{i}}\|_2}.
\end{equation}
By varying the number of intermediaries, we obtain different values for noise levels and diversities (see Fig.~\ref{fig:method}~(b)). By fitting the relations between noise level (client update diversity) and the number of intermediaries for each client (denoted as $v$), we surprisingly find the relations that:
\begin{equation}
    \xi_v = v^{-1} \cdot \xi, \quad \varphi_v = v \cdot \varphi,
\end{equation}
where $\xi_v$ and $\varphi_v$ denote the value when each client is split into $v$ intermediaries. By defining the intermediary ratio as $\lambda = \sfrac{\xi}{\varphi}$, we can use this ratio to quantify the relations between noise level and diversity, which helps identify the optimal number of intermediaries to generate.
\\\\
\textbf{Adaptive intermediary generation.}
We can generate the intermediary based on the defined intermediary ratio $\lambda$. 
We experimentally investigated the relationships between the final performance and the number of intermediaries and found the optimal ratio lies in the range of $\sfrac{1}{N}$. Therefore, for each client, the number of intermediaries is $v = \sqrt[2]{N\cdot\sfrac{\xi}{\varphi}}$. 
Considering the extreme case of $\lim_{N\rightarrow\infty} \xi = 0$, we can also infer the ratio $\lambda=0$, which further validates the rationality and consistency with our empirical findings.
For the practical application, we can initialize the number of intermediaries via the first round results. Then, for each round, we will re-calculate the ratio using $\xi$ and $\varphi$ from the last round, and then adaptively split clients to make sure the new ratio lies around $\sfrac{1}{N}$.

\subsection{Cumulation of Sample-Level DP to Client-Level}

We further investigate the relationships between client-level DP and sample-level DP, by cumulating sample-level DP mechanism to a client level. In DP-SGD~\cite{Abadi2016DLDP}, denote the standard deviation of Gaussian noise as $\sigma = z(\epsilon,\delta)c/K$ with $K$ being the batch size, $c$ being the sample-level gradient clip bound and $z$ being the noise multiplier determined by privacy accountant with $(\epsilon, \delta)$. Noise is added to each batch gradient before taking a descent, so that each step is $(\epsilon, \delta)$-DP.

Note that $z$ can take different forms, the form provided by moment accountant \cite{Abadi2016DLDP} is $z(\epsilon,\delta)=\mathcal{O}(\sqrt{\ln(1/\delta)/\epsilon^2})$. Through the use of the moment accountant and sensitivity cumulation, we can calculate the standard deviation of cumulated noise in $\Tau$ steps as $\sigma_\Tau = z'(\epsilon_\Tau,\delta_\Tau)\mathcal{S}_\Tau$, where $\epsilon_\Tau=\mathcal{O}(\sqrt{\Tau}\epsilon)$, $\delta_\Tau=\mathcal{O}(\delta)$, and $\mathcal{S}_\Tau=\mathcal{O}(\Tau c)$. It follows that $z'=\mathcal{O}(1/\epsilon_\Tau) =\mathcal{O}(z/\sqrt{\Tau})$, and $\sigma_s'=\mathcal{O}(\sqrt{\Tau}\sigma_s)$. This indicates that the noise scale cumulates at a rate of $\mathcal{O}(\sqrt{\Tau})$. With regards to performance, we prove in Appendix~\ref{app:sec:c} that the variance of the difference between the noisy model and the original model diverges with a rate of $\mathcal{O}((1-2\eta\beta+\eta^2\mu^2)^{\Tau} )$ for $\mu$-convex, $\beta$-smooth loss functions. This shows that increasing $\Tau$ also increases the probability of obtaining a model which diverges further from the original model, resulting in poorer performance. 
\\
\\
\textbf{On the client-level. } For client-level noise, we can compute the standard deviation as $\sigma_c=z(\epsilon_c,\delta_c)C$, where $C$ is the clip bound of client update. The clip bound is typically set to the median among $l2$-norms of all client updates. Assuming an identical distribution across clients and samples, we have $C=\mathcal{O}(\Tau c)$. As a result, we have $z_c=\mathcal{O}(z_\Tau)$, indicating that the cumulation of sample-level noise in DP-SGD gives the same DP level up to a constant, which is equivalent to adding noise directly to the client level through the moment accountant. Regarding the performance, we note that by leveraging the noisy models from several clients that hold identically distributed datasets, we can reduce the probability of getting a significantly drifted model without additional privacy leakage.

\section{Experiment}

\subsection{Experimental Setup}

\textbf{Datasets.}
We evaluate our method on two tasks: 1) intracranial hemorrhage (ICH) classification, and 2) prostate MRI segmentation. For ICH classification, we use the RSNA-ICH dataset~\cite{ich} and follow \cite{kyung2022improved} to relieve the class imbalance across ICH subtypes and perform the binary diseased-or-healthy classification. We randomly sample 25,000 slices and split them into 20 clients, where each client data is split into 60\%, 20\%, and 20\% for training, validation, and testing. We resize images to $224\times224$ and perform data augmentation with random affine and horizontal flip. For prostate segmentation, we adopt a multi-site T2-weighted MRI dataset~\cite{liu2020ms} which contains 6 different data sources from 3 public datasets \cite{lemaitre2015computer,litjens2014evaluation,nicholas2015nci}. We regard each data source as one client, resize images to $256\times 256$, and use 50\%, 25\%, and 25\% for for training, validation and testing.
\\\\\textbf{Privacy setup.} We use the Opacus'~\cite{yousefpour2021opacus} implementation of privacy loss random variables (PRVs) accountant~\cite{gopi2021numerical} for the Gaussian mechanism for our privacy accounting. We restrict the total number of training rounds and then account for any privacy overheads with various privacy levels controlled by the noise multiplier $z$, where a higher $z$ indicates a higher privacy regime $\epsilon$. Adaptive clipping \cite{andrew2021differentially} is employed to bound each client's contribution in the federation. Following~\cite{zheng2021federated}, we report the results by exploring effects of different noise multiplier $z$ values. We set $z$ in the range of $\{0.5,1.0,1.5\}$ for ICH diagnosis, and $\{0.3,0.5,0.7\}$ for prostate segmentation, which induces privacy budgets of $\{245.6,72.4,36.9\}$ and $\{597.3,224.7,119.4\}$, respectively. We set $\delta=10^{-k}$ where $k$ is the smallest integer that satisfies $10^{-k}\leq1/n$ for the client number $n$ as suggested by~\cite{li2023differentially}. 
\\\\\textbf{Implementation details.} We use Adam optimize, set the local update epoch to 1, and set total communication rounds to 100. We use DenseNet121~\cite{huang2017densely} for classification, the batch size is 16 and the learning rate is $3\times 10^{-4}$. We use UNet~\cite{ronneberger2015u} for segmentation, the batch size of 8, and the learning rate is $10^{-3}$. 

\begin{table}[!tp]
\newcommand{\highlight}{\rowcolor[gray]{0.93}}
\centering
\caption{Performance comparison of different DP optimization methods and ours. We report mean and standard deviation across three independent runs with different seeds.}
\label{tab:results}
\setlength{\tabcolsep}{1pt}
\setlength{\aboverulesep}{0pt}
\setlength{\belowrulesep}{0pt}
\resizebox{1\textwidth}{!}{%
\noindent\begin{tabular}{l c cc c cc c cc c cc}
\toprule

\multicolumn{13}{c}{Intracranial Hemorrhage Diagnosis ($N=20$)}                                                                \\ \hline
\multirow{2}{*}{Method} && \multicolumn{2}{c}{No Privacy} && \multicolumn{2}{c}{$z=0.5$}  && \multicolumn{2}{c}{$z=1.0$} && \multicolumn{2}{c}{$z=1.5$} \\
 \cmidrule{3-4} \cmidrule{6-7} \cmidrule{9-10} \cmidrule{12-13} 
 && AUC $\uparrow$ & {Acc} $\uparrow$ && {AUC} $\uparrow$  & {Acc} $\uparrow$ && {AUC} $\uparrow$ & {Acc} $\uparrow$  && {AUC} $\uparrow$ & {Acc} $\uparrow$ \\ \hline
DP-FedAvg~\cite{brendan2018learning} && $90.88_{\pm 0.15}$     & $82.85_{\pm 0.26}$      &&           $70.38_{\pm 0.61}$                  &          $64.94_{\pm 0.28}$                 & &        $68.00_{\pm 1.43}$                      &      $63.55_{\pm 1.12}$                     &&       $66.77_{\pm 0.12}$               &           $62.01_{\pm 0.58}$                 \\
\highlight +Ours &&      -           &  - &&  $82.42_{\pm 0.29}$ & $74.87_{\pm 0.43}$ && $80.84_{\pm 0.78}$ & $73.37_{\pm 0.92}$ && $80.77_{\pm 0.80}$ & $72.95_{\pm 0.47}$\\
\hline
DP-FedAdam~\cite{fedadam} && $91.85_{\pm0.30}$& $84.16_{\pm0.56}$&&$75.91_{\pm0.28}$&$68.75_{\pm0.16}$&&$70.75_{\pm2.18}$& $65.34_{\pm0.64}$&& $70.89_{\pm2.05}$&$63.73_{\pm1.16}$\\

\highlight +Ours &&      -           &  - &&  $82.86_{\pm 0.47}$ & $75.20_{\pm 0.27}$ && $81.63_{\pm 0.38}$ & $73.99_{\pm 0.79}$ && $80.55_{\pm 0.60}$ & $73.06_{\pm 0.68}$\\
\hline

DP-FedNova~\cite{fednova} && $90.89_{\pm0.25}$ & $83.00_{\pm0.17}$ && $71.84_{\pm1.51}$ & $66.25_{\pm0.91}$& & $69.26_{\pm1.76}$ & $63.75_{\pm1.49}$ && $68.45_{\pm0.93}$ & $63.21_{\pm1.13}$\\

\highlight +Ours &&      -           &  - && $82.73_{\pm 0.38}$ & $75.35_{\pm 0.27}$ && $80.64_{\pm 0.57}$ & $73.55_{\pm 0.78}$ && $79.39_{\pm 0.41}$ & $71.70_{\pm 0.35}$\\
\hline

$\mathrm{DP}^2$-RMSProp~\cite{li2023differentially}                  &  &         $88.89_{\pm 0.02}$                &          $80.77_{\pm 0.25}$                &&             $70.59_{\pm 1.16}$            &              $64.92_{\pm 1.19}$            &&  $67.43_{\pm 0.60}$    &   $62.05_{\pm 0.23}$               &&  $65.91_{\pm 1.40}$ & $61.77_{\pm 1.13}$          \\

\highlight +Ours &&      -           &  - && $81.60_{\pm 0.68}$ & $74.47_{\pm 0.95}$ && $80.23_{\pm 0.22}$ & $73.15_{\pm 0.44}$ && $81.32_{\pm 0.50}$ & $74.21_{\pm 0.85}$\\

\bottomrule\toprule

\multicolumn{13}{c}{Prostate MRI Segmentation ($N=6$)}                    \\ \hline
\multirow{2}{*}{Method} && \multicolumn{2}{c}{No Privacy} && \multicolumn{2}{c}{$z=0.3$}  && \multicolumn{2}{c}{$z=0.5$} && \multicolumn{2}{c}{$z=0.7$} \\
 \cmidrule{3-4} \cmidrule{6-7} \cmidrule{9-10} \cmidrule{12-13} 
 && Dice $\uparrow$ & IoU $\uparrow$ && Dice $\uparrow$ & {IoU} $\uparrow$ && {Dice} $\uparrow$  & {IoU} $\uparrow$ && {Dice} $\uparrow$& {IoU} $\uparrow$ \\ \hline
DP-FedAvg~\cite{brendan2018learning} && $87.69_{\pm 0.12}$ &  $79.62_{\pm 0.13}$  &&           $41.43_{\pm 3.89}$                  &          $29.28_{\pm 3.40}$                  &&        $22.45_{\pm 3.15}$                      &      $13.50_{\pm 2.24}$                     &&       $13.59_{\pm 0.96}$               &           $7.41_{\pm 0.59}$                 \\

 \highlight +Ours &&      -           &  - && {$70.59_{\pm 1.55}$} & {$67.72_{\pm 0.47}$} && {$63.28_{\pm 4.69}$} & {$61.12_{\pm 0.50}$} && {$58.14_{\pm 4.71}$} & {$56.18_{\pm 7.21}$}\\
\hline

DP-FedAdam~\cite{fedadam}   &&   $87.63_{\pm 0.16}$ &     $79.65_{\pm 0.20}$       &&        $38.24_{\pm 2.86}$     &     $38.24_{\pm 1.38}$           &&     $16.50_{\pm 1.82}$     &        $15.03_{\pm 2.28}$      &&   $9.15_{\pm 2.19}$   &    $5.49_{\pm 2.22}$      \\
 
\highlight +Ours && - & -  && {$69.68_{\pm 1.45}$} & {$61.31_{\pm 0.71}$} && {$57.11_{\pm 6.30}$} & {$57.23_{\pm 1.17}$} && {$43.99_{\pm 9.04}$}& {$47.49_{\pm 7.81}$}\\
\hline

DP-FedNova~\cite{fednova} &&   $87.44_{\pm 0.35}$ &   $79.49_{\pm 0.29}$ &&   $41.91_{\pm 6.34}$ &   $29.33_{\pm 5.78}$ &&   $17.10_{\pm 7.45}$ &   $9.96_{\pm 4.81}$ &&   $11.41_{\pm 0.64}$ &   $6.06_{\pm 0.38}$\\

\highlight +Ours && - & - && $70.80_{\pm 1.28}$ & $66.64_{\pm 1.42}$ && $68.63_{\pm 2.17}$ & $63.99_{\pm 0.96}$ && $58.99_{\pm 4.43}$ & $59.14_{\pm 2.03}$\\
\hline

$\mathrm{DP}^2$-RMSProp~\cite{li2023differentially}              && $87.46_{\pm 0.08}$ &   $80.00_{\pm 0.09}$                      &&    $38.33_{\pm 2.44}$                      &      $24.73_{\pm 3.80}$                   &&                $16.74_{\pm 0.83}$          &            $10.75_{\pm 0.99}$             &&           $7.77_{\pm 0.41}$              &    $4.00_{\pm 0.23}$                    \\
\highlight +Ours && - & - &&  $63.05_{\pm 2.60}$ & $63.05_{\pm 2.60}$ && $53.53_{\pm 4.97}$ & $60.84_{\pm 5.81}$ && $47.82_{\pm 2.01}$ & $59.66_{\pm 2.76}$\\

\bottomrule
\end{tabular}}
\end{table}

\subsection{Empirical Evaluation}

First, we present experimental results using different global optimizers on the server with client-level DP. Then, we demonstrate how our adaptive intermediary strategy benefits privacy-performance trade-offs. We consider four popular private server optimizers: {DP-FedAvg}~\cite{brendan2018learning} which adds client-level privacy protection to FedAvg~\cite{fedavg}, {DP-FedAdam} which is a differentially private version of the optimizer FedAdam~\cite{fedadam}, {DP-FedNova} which we equip the global solver FedNova~\cite{fednova} for client-level DP, and {DP$\mathbf{^2}$-RMSProp}~\cite{li2023differentially} which is a very recent private optimization framework and we deploy it as the global optimizer in FL.

We perform validation with different noise multiplier values. Non-private FL is also provided as a performance upper bound. Note that our method has the same performance ascompared methods in non-private settings, because there are no noises to harmonize. As can be observed from Table~\ref{tab:results}, severe performance degradation occurs in the private cross-silo FL setting, especially for high-privacy regimes (e.g., $z=0.7$ for prostate segmentation). There are no significant differences among different global optimizers, which shows that the optimizers carefully designed for non-private FL are unable to address the noisy gradient issue in DP settings. However, our method relieves the gradient corruption and consistently and substantially boosts performance even with large noises (e.g., 44.55\% Dice boost on prostate segmentation with $z=0.7$). We also identify that the influences on performance introduced by DP may vary across different tasks and client numbers. For example, the segmentation task with fewer clients is more seriously damaged compared with the classification task with more clients.

\begin{figure*}[t]
\centering
\includegraphics[width=0.99\columnwidth]{./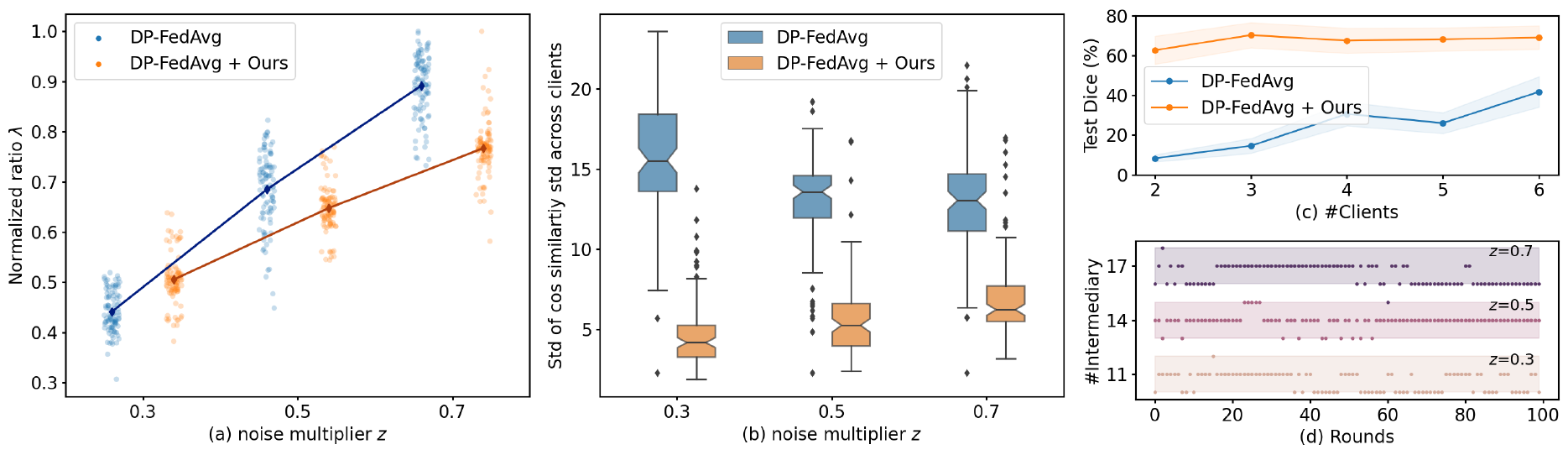}
\caption{Analytical studies on prostate segmentation. (a) The distribution of normalized ratio $\lambda$ across communication rounds under different privacy levels. (b) The std of cosine similarities between $\Delta_i$ and the aggregated gradients in each round under different privacy levels. (c) Performance with different client numbers. (d) Intermediary variations across training rounds under different privacy regimes.}
\label{fig:analytical_study}
\end{figure*}

\subsection{Analytical Studies}
\label{sec:analysis}
\textbf{Effects of optimizing privacy-performance trade-offs.} We present the dynamic behavior of our method regarding variations of the intermediary ratio $\lambda$ across different rounds in Fig.~\ref{fig:analytical_study} (a). Compared with DP-FedAvg~\cite{brendan2018learning}, where $\lambda$ shows a significant increase with the rise of noise multiplier $z$, our method harmonizes this trend with more centralized distributions by the adaptive intermediary for better privacy-performance trade-offs. In Fig.~\ref{fig:analytical_study} (b), we also study the standard deviation of similarities, which is another metric for quantifying gradient diversity between local and global gradients. Our method shows more stable optimization directions with less variance among clients. Moreover, we observe a decline in gradient diversities as the privacy regime rises for DP-FedAvg \cite{brendan2018learning}. To interpret, we speculate that local optimization may be dominated by greater noises for more common gradient de-corruption.
\\\\\textbf{Client scalability analysis.} As the noise level is highly dependent on client numbers (see Eq.~(\ref{eq:dpfedavg}) and Table~\ref{tab:results}), we investigate the scalability of DP-FedAvg~\cite{brendan2018learning} and our method by varying number of clients. Fig.~\ref{fig:analytical_study} (c) presents the results on prostate segmentation with different training clients ($z=0.3$). Notably, we keep test data unchanged for fair comparisons. We observe a dramatic drop in performance of DP-FedAvg~\cite{brendan2018learning} due to excessive noise when the number of clients shrinks. However, our method performs stably even under extreme conditions, e.g., the federation only has two participants.
\\\\\textbf{Stability of Adaptive Intermediary Estimation.} Finally, we analyze the historical variation of our adaptive intermediary strategy in Fig.~\ref{fig:analytical_study} (d), where we present the intermediary numbers during the training progress. We expect that more intermediaries are required to balance the privacy-performance trade-off with a greater noise multiplier $z$. Besides, we verify the reliability and stability of our adaptive intermediary estimation by showing that the variation during the training does not exceed one, except for a single instance when $z=0.7$.

\section{Conclusion}
In this paper, we propose a novel adaptive intermediary method to promote privacy-performance trade-offs in the context of client-level DP in FL. We have comprehensively studied the relations among number of intermediaries, noise levels and training diversities in our work. We also investigate relations between sample-level and client-level DP. Our proposed method outperforms compared methods on both medical image diagnosis and segmentation tasks and shows good compatibility with existing DP optimizers.
For future work, it is promising to investigate our method for clients with imbalanced class distributions, where the intermediary may not have all labels.
\\
\\
\textbf{Acknowledgement.} This work was supported in part by Shenzhen Portion of Shenzhen-Hong Kong Science and Technology Innovation Cooperation Zone under HZQB-KCZYB-20200089, in part by National Natural Science Foundation of China (Project No. 62201485), in part by National Key R\&D Program of China Project 2022ZD0161100, in part by Hong Kong Innovation and Technology Commission Project No. ITS/238/21, in part by Science, Technology and Innovation Commission of Shenzhen Municipality Project No. SGDX20220530111201008, in part by Hong Kong Research Grants Council Project No. T45-401/22-N, and in part by NSERC Discovery Grant (DGECR-2022-00430).

\bibliographystyle{splncs04}
\bibliography{references}

\clearpage
\appendix

\newcommand{\M}{\mathcal{M}} 
\setcounter{table}{0}
\setcounter{figure}{0}
\setcounter{theorem}{0}
\setcounter{definition}{0}
\renewcommand\thetable{\thesection.\arabic{table}}
\renewcommand\thefigure{\thesection.\arabic{figure}} 
\renewcommand\thetheorem{\thesection.\arabic{theorem}}    
\renewcommand\thedefinition{\thesection.\arabic{definition}}    


\section{Notation Table}
\begin{table}[h]
\centering
\caption{Notations occurred in the paper.}
\begin{tabular}{cl}
\hline
Notations & Description \\ \hline
$\epsilon$ & privacy budget, $\epsilon \in \mathbb{R}$ \\
$\delta$ & probability of $\epsilon$-DP failure, $\delta \in [0,1]$ \\
$z$ & the privacy accountant, $z=z(\epsilon,\delta)\in\mathbb{R}$\\
$C$ & clip bound of the client gradient, which typically takes the value of the\\
    & medium or maximum of all the gradients \\
$c$ & clip bound of the batch gradient in DP-SGD\\
$\mathcal{S}(\mathcal{M})$ & sensitivity of the mechanism $\mathcal{M}$\\
$\sigma$ & standard deviation of the Gaussian noise added, $\sigma=z\mathcal{S} \in \mathbb{R}$ \\
$\zeta$ & the Gaussian noise added, $\zeta \in \mathbb{R}$ \\
$\mathcal{X},\mathcal{Y}$ & collection of all possible datasets from clients, sub-clients\\
$N; N_v$ & total number of clients, sub-clients, $N, N_v\in \mathbb{N} $ \\
$v$ & number of sub-clients that each client is split into, $v\in\mathbb{N}$ \\
$\Delta; \Delta'; \Tilde{\Delta'}$ & client gradient; clipped client gradient; clipped client gradient with noise \\
$t, T$ & number, total number of communication round in FL, $t, T\in\mathbb{N}$ \\
$\tau, \Tau$ & number, total number of descent made in SGD, $\tau, \Tau\in\mathbb{N}$ \\
$\{x_1, ..., x_K\}$ & data sample\\
$\theta$ & model parameter, $\theta \in \mathbb{R}^d$ \\

\hline
\end{tabular}%
\label{tab:notation}
\end{table}

\section{On client and sub-client level privacy}
\label{app:secB}

This section is to investigate the relationship between client level and sub-client level privacy. We first give the precise definitions of the relevant terms.

\begin{definition}{(Client-level Adjacent Dataset)}
    For $\mathcal{X}$ the collection of all feasible combination of clients who participated in model training, two datasets $D,D'\in\mathcal{X}$ are adjacent (denoted by $D\sim_c D'$, or $D\sim D'$ if the context is clear) if they differ by one participating client by replacement.  
\end{definition}

\begin{definition}{($(\epsilon, \delta)$-Differential Privacy)}
For a randomized learning mechanism $\M$: $\mathcal{X} \rightarrow \mathcal{R}$, where $\mathcal{X}$ is the collection of datasets it can be trained on, and $\mathcal{Y}$ is the collection of model it can generate, it is $(\epsilon, \delta)$-DP if:
$$(\forall S \subseteq \mathcal{R})(\forall d,d'\in \mathcal{X}, d\sim d') 
\operatorname{Pr}[\M(d) \in S] \leq \exp 
(\varepsilon) \cdot 
\operatorname{Pr}\left[\M\left(d^{\prime}\right) \in 
S\right]+\delta $$
\end{definition}

\begin{definition}{(Sub-client)}
    For $\mathcal{X}$ the collection of possible datasets from extant clients, for each $D_i\in\mathcal{X}$ the dataset of client $i$, we randomly split $D_i$ into $n$ disjoint subsets $d_{i,1}, ..., d_{i,n}$, so that $\sqcup_j d_{i,j}=D_i$. We consider the imaginary client holding the dataset $d_{i,j}$ as a sub-client of client $i$. 
\end{definition}

The collection of all possible datasets formed by the sub-clients is denoted by $\mathcal{Y}$. Note that $\mathcal{X}\subseteq\mathcal{Y}$.

\begin{theorem}
\label{app:thm:dp_split}
If a randomized learning mechanism $\mathcal{M}:\mathcal{X}\rightarrow \mathcal{R}$ is $(\epsilon,\delta)-DP$, then its induced mechanism $\Tilde{\mathcal{M}}:\mathcal{Y}\rightarrow \mathcal{R}$ is also $(\epsilon,\delta)$-DP. 
\end{theorem}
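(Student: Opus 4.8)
The plan is to verify the defining DP inequality directly on the enlarged domain $\mathcal{Y}$, exploiting the fact that per-unit gradient clipping in the Gaussian mechanism makes the single-participant sensitivity identical whether a participant is a whole client or a single sub-client. First I would pin down the adjacency relation on $\mathcal{Y}$: two sub-client datasets $Y, Y' \in \mathcal{Y}$ are adjacent when they differ by replacing a single sub-client $d_{i,j}$, in exact analogy with the client-level relation on $\mathcal{X}$. I would also make explicit that the induced mechanism $\tilde{\mathcal{M}}$ inherits the clip threshold $C$ and the noise multiplier $z$ from $\mathcal{M}$, applying the same clip-sum-perturb procedure of Eq.~(\ref{eq:dpfedavg}) but with sub-clients as the participating units. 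The target is then $\Pr[\tilde{\mathcal{M}}(Y)\in S] \le e^{\epsilon}\,\Pr[\tilde{\mathcal{M}}(Y')\in S] + \delta$ for every $S\subseteq\mathcal{R}$ and every adjacent pair $Y\sim Y'$.

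The core step is a sensitivity comparison. Writing the data-dependent query underlying both mechanisms as $q(\cdot)=\sum_k \hat{\Delta}_k$, a sum of per-unit updates each clipped to $\ell_2$-norm at most $C$, a sub-client replacement alters exactly one summand, so the triangle inequality bounds the $\ell_2$-sensitivity of $q$ on $\mathcal{Y}$ by $2C$ --- the identical bound that governs $\mathcal{M}$ under client replacement on $\mathcal{X}$, since one client's clipped update is likewise capped at $C$. Thus $\tilde{\mathcal{M}}$ adds the same Gaussian noise $\mathcal{N}(\mathbf{0}, z^2C^2\mathbf{I})$ calibrated to an identical sensitivity, and because the Gaussian-mechanism guarantee depends only on the ratio of sensitivity to noise scale, the same $(\epsilon,\delta)$ is retained. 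The concluding division by the participant count $N_v$ is a data-independent rescaling --- and under replacement adjacency $N_v$ is fixed across $Y$ and $Y'$ --- so DP is preserved under post-processing.

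The main obstacle I anticipate is conceptual, not computational: the guarantee must hold over the strictly larger family of adjacent pairs that $\mathcal{Y}$ admits, as $\mathcal{X}\subseteq\mathcal{Y}$ introduces sub-client replacements with no client-level counterpart. The tempting shortcut --- merging the sub-clients of $Y$ back into clients and reducing to $\mathcal{M}$ on $\mathcal{X}$ --- fails, because the merged datasets differ by only part of one client's data rather than by a whole client, so they are not client-adjacent. The clean way around this is precisely the clipping observation above: per-unit clipping equalizes the single-unit sensitivity across granularities, so I need only the sensitivities to be comparable, not the adjacency relations to be nested. I would therefore isolate a short lemma bounding the single-sub-client sensitivity of $q$ by $2C$, and then invoke the Gaussian mechanism together with post-processing invariance to close the argument.
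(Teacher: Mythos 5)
Your argument is valid for the concrete clip--sum--perturb Gaussian instantiation, but it takes a genuinely different route from the paper, and along the way you reject the paper's actual proof for a reason that does not hold. The paper's proof is exactly the ``tempting shortcut'' you dismiss: given sub-client-adjacent $d\sim d'$ in $\mathcal{Y}$ differing in sub-clients $i$ versus $i'$, it regroups $d$ into a client-level dataset $D\in\mathcal{X}$ in which $i$ sits inside some client $I$, and obtains $D'$ by replacing $I$ with the surrogate client $I'$ that differs from $I$ only in swapping $i$ for $i'$. Your objection --- that $D$ and $D'$ ``differ by only part of one client's data rather than by a whole client, so they are not client-adjacent'' --- misreads the adjacency relation. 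The paper's Definition of client-level adjacency is \emph{replacement} adjacency: $D\sim_c D'$ whenever one participating client is replaced by another, with no requirement that the replacement be disjoint from or wholly different to the original. Since $I\neq I'$ as clients, $D\sim_c D'$ holds, the hypothesis $\Pr[\mathcal{M}(D)\in S]\le e^{\epsilon}\Pr[\mathcal{M}(D')\in S]+\delta$ applies, and the bound transfers to $\tilde{\mathcal{M}}(d)$ and $\tilde{\mathcal{M}}(d')$. This lifting argument is mechanism-agnostic, which is what the theorem as stated requires: it asserts the conclusion for an \emph{arbitrary} $(\epsilon,\delta)$-DP randomized learning mechanism, with no assumption of per-unit clipping, Gaussian noise, or a sensitivity-calibrated noise scale.

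By contrast, your sensitivity argument --- one unit replacement perturbs the clipped sum by at most $2C$ in $\ell_2$ whether the unit is a client or a sub-client, the added noise $\mathcal{N}(\mathbf{0},z^2C^2\mathbf{I})$ is calibrated to that same bound, and the division by the (adjacency-invariant) participant count is post-processing --- is correct and is in fact the operative reason the method works in practice: it explains why the \emph{noise multiplier} $z$ needed for a given budget is unchanged while the effective per-parameter noise $zC/N_v$ shrinks. But as a proof of the stated theorem it proves a strictly narrower claim (DP of one particular mechanism on $\mathcal{Y}$, re-derived from scratch) rather than the asserted implication from DP on $\mathcal{X}$ to DP on $\mathcal{Y}$. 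To close the general statement you still need the adjacency-lifting step you ruled out; once you accept that replacement adjacency tolerates partially overlapping clients, that step is a three-line reduction and your sensitivity lemma becomes a useful but optional corollary for the Gaussian case.
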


\begin{proof}
    Consider arbitrary $d,d'\in\mathcal{Y}, d\sim d'$, they differ by one sub-client, call them $i$ and $i'$. Now for a mechanism $\tilde{\mathcal{M}}$ trained on them, it is induced from a mechanism trained on dataset in $\mathcal{X}$. So $d$ correspond to a dataset $D\in\mathcal{X}$, and the sub-client $i$ belongs to some client $I$. \\
    
    \noindent Now for $D$, to get $D'$, we replace client $I$ with a surrogate client $I'$ by replacing the sub-client $i$ with $i'$. In this way, we have $D'=d'$ and $D\sim D'$. It follows that: 
    \begin{align*}
        \operatorname{Pr}[\tilde{\M}(d) \in S] &= \operatorname{Pr}[\M(D) \in S]\\
        &\leq \exp (\epsilon) \cdot \operatorname{Pr}\left[\M\left(D^{\prime}\right) \in S\right]+\delta\\
        &=  \exp (\epsilon) \cdot \operatorname{Pr}\left[\tilde{\M}\left(d^{\prime}\right) \in S\right]+\delta
    \end{align*}
    showing that the mechanism $\tilde{\mathcal{M}}$ is $(\epsilon,\delta)$-DP on $\mathcal{Y}$. \qed
\end{proof}

For the reverse relation, the most direct approach is to use group privacy. 

\begin{theorem}{(Group Privacy)} 
    If $\mathcal{M}:\mathcal{Y}\to\mathbb{R}$ is $(\epsilon,\delta)$-DP, and that each client is divided into $n$ sub-clients, then $\mathcal{M|_\mathcal{X}}:\mathcal{X}\to\mathbb{R}$ is $(n\epsilon,\frac{1-\epsilon^n}{1-\epsilon}\delta)$-DP.
\end{theorem}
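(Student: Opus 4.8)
The plan is to reduce the claim to the standard group-privacy argument by exploiting the fact that replacing one client in $\mathcal{X}$ corresponds to replacing exactly $n$ sub-clients in $\mathcal{Y}$, and then chaining the $(\epsilon,\delta)$-guarantee of $\mathcal{M}$ over $n$ single-sub-client steps. First I would fix two adjacent datasets $D,D'\in\mathcal{X}$ with $D\sim_c D'$, so that $D$ and $D'$ differ by a single client: a client $I$ (with sub-clients $d_{I,1},\dots,d_{I,n}$) is replaced by a surrogate client $I'$ (with sub-clients $d'_{I',1},\dots,d'_{I',n}$). Because each client decomposes into its $n$ sub-clients, $D$ and $D'$, viewed as elements of $\mathcal{Y}$, differ precisely in these $n$ sub-clients.

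Next I would build a hybrid chain $D=Y_0,Y_1,\dots,Y_n=D'$ inside $\mathcal{Y}$, where $Y_k$ is obtained from $D$ by replacing the first $k$ sub-clients $d_{I,1},\dots,d_{I,k}$ with $d'_{I',1},\dots,d'_{I',k}$ and leaving the rest unchanged. By construction each consecutive pair $Y_{k-1},Y_k$ differs by exactly one sub-client, hence $Y_{k-1}\sim Y_k$ in $\mathcal{Y}$, and every $Y_k\in\mathcal{Y}$ (even though the interior hybrids need not lie in $\mathcal{X}$). Since $\mathcal{M}$ is $(\epsilon,\delta)$-DP on $\mathcal{Y}$, for every measurable $S\subseteq\mathcal{R}$ and each $k$,
\begin{equation}
    \operatorname{Pr}[\mathcal{M}(Y_{k-1})\in S]\le e^{\epsilon}\operatorname{Pr}[\mathcal{M}(Y_k)\in S]+\delta .
\end{equation}

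Then I would compose these $n$ inequalities by induction on $k$. Unrolling the recursion telescopes the $e^{\epsilon}$ factors into $e^{n\epsilon}$ and accumulates the additive slacks as a geometric sum:
\begin{equation}
    \operatorname{Pr}[\mathcal{M}(Y_0)\in S]\le e^{n\epsilon}\operatorname{Pr}[\mathcal{M}(Y_n)\in S]+\Big(\sum_{k=0}^{n-1}e^{k\epsilon}\Big)\delta ,
\end{equation}
and $\sum_{k=0}^{n-1}e^{k\epsilon}=\tfrac{e^{n\epsilon}-1}{e^{\epsilon}-1}$. Identifying $Y_0=D$ and $Y_n=D'$ yields the $\big(n\epsilon,\tfrac{e^{n\epsilon}-1}{e^{\epsilon}-1}\delta\big)$-DP bound for $\mathcal{M}|_{\mathcal{X}}$, which is exactly the asserted coefficient $\tfrac{1-\epsilon^{n}}{1-\epsilon}$ once $\epsilon$ in the fraction is read as $e^{\epsilon}$.

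I expect the only genuine obstacle to be the bookkeeping of the hybrid chain: one must verify that the adjacency notion on $\mathcal{Y}$ is replacement (matching the sub-client adjacency induced by Definition of sub-clients), so that overwriting a whole client is realized by exactly $n$ single-sub-client replacements and the chain has precisely $n$ links, no more and no fewer. The telescoping composition is then routine and the geometric-sum evaluation is elementary; the symmetric inequality (swapping the roles of $D$ and $D'$) follows from the same chain read in reverse, giving the two-sided statement.
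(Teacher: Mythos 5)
Your proposal is correct and follows essentially the same hybrid-chain group-privacy argument as the paper's own proof. In fact you are more careful than the paper in two places: the paper's chain $d_1,\dots,d_n$ with $d_1=d$ and $d_n=d'$ has only $n-1$ links (an off-by-one your $Y_0,\dots,Y_n$ avoids), and the paper's accumulated slack $\sum_{i=0}^{n-1}\epsilon^i\delta$ should indeed read $\sum_{i=0}^{n-1}e^{i\epsilon}\delta=\tfrac{e^{n\epsilon}-1}{e^{\epsilon}-1}\delta$, exactly as you point out when you observe that the stated coefficient $\tfrac{1-\epsilon^{n}}{1-\epsilon}$ only matches if $\epsilon$ in the fraction is read as $e^{\epsilon}$.
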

\begin{proof}
    For $D,D'\in\mathcal{X}, D\sim D'$, there exists $d,d'\in\mathcal{Y}$ s.t. $d$ and $d'$ differ by $n$ sub-clients. Then we can constuct a sequence of $\{d_i\}_{i=1}^n$ s.t. $d_1=d, d_n=d'$ and $d_i\sim d_{i+1},\forall 1\leq i\leq n-1$ by varying one sub-client each time. \\
    
    \noindent Then we have 
    \begin{align*}
        \operatorname{Pr}[\M(D) \in S] &= \operatorname{Pr}[\M(d_1) \in S]\\
        &\leq \exp (\epsilon) \cdot \operatorname{Pr}\left[\M\left(d_2\right) \in S\right]+\delta \\
        &\leq \exp (\epsilon) \cdot \Big(\exp (\epsilon) \cdot \operatorname{Pr}\left[\M\left(d_2\right) \in S\right]+\delta\Big)+\delta\\
        &\leq ... \text{ (substituting the inequalities in recursively)}\\
        &\leq \exp (n\epsilon) \cdot \operatorname{Pr}\left[\M\left(d_n\right) \in S\right] + \sum_{i=0}^{n-1}\epsilon^i\delta\\
        &= \exp (n\epsilon) \cdot \operatorname{Pr}\left[\M\left(D'\right) \in S\right] + \frac{1-\epsilon^n}{1-\epsilon}\delta
    \end{align*} \qed
\end{proof}

Now with a further assumption that the algorithm can access at most one sub-client from each client in each round of training, we have the following reverse relation.

\begin{theorem}{(Composition theorem, Kairouz et al. 2015)}
    If a randomized learning mechanism $\mathcal{M}:\mathcal{Y}\rightarrow \mathcal{R}$ satisfying the above assumption is $(\epsilon,\delta)-DP$, then $\mathcal{M}|_\mathcal{X}:\mathcal{X}\rightarrow \mathcal{R}$ is $(\epsilon'(d),1-(1-\delta)^k(1-d))$-DP, with $d\in[0,1]$ and
    
    $$\epsilon'=\min\left\{k\epsilon, \frac{(e^\epsilon-1)\epsilon k}{e^\epsilon+1}+\epsilon\sqrt{2k\ln\Bigl(e+\frac{\sqrt{k\epsilon^2}}{d}\Bigr)}, \frac{(e^\epsilon-1)\epsilon k}{e^\epsilon+1}+\epsilon\sqrt{2k\ln\Bigl(\frac{1}{d}\Bigr)}{} \right\}$$
\end{theorem}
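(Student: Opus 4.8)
The plan is to recognize this statement as a direct instantiation of the advanced (adaptive) composition theorem of Kairouz, Oh, and Viswanath, and to spend the real work on the \emph{reduction} that lets us invoke it with composition count $k$ (the number of training rounds) rather than paying the group-privacy cost of $n\epsilon$ from the previous theorem. First I would decompose the full learning mechanism $\M$ into its per-round sub-mechanisms $\M_1,\dots,\M_k$, where $\M_t$ produces the server update in round $t$ and the transcript of $\M|_{\mathcal{X}}$ is the adaptive composition $(\M_1,\dots,\M_k)$; each $\M_t$ is taken to be $(\epsilon,\delta)$-DP with respect to sub-client adjacency, matching the per-round Gaussian guarantee of DP-FedAvg.

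The crucial step is the reduction. I would fix a client-adjacent pair $D\sim_c D'\in\mathcal{X}$ differing by the replacement of client $I$ with a surrogate $I'$, and track how this single change is seen round by round. Under the sub-client decomposition, replacing $I$ by $I'$ replaces all $n$ of its sub-clients simultaneously; a naive application of group privacy would therefore cost $n\epsilon$. However, the standing assumption that each round accesses at most one sub-client per client means that in round $t$ the query touches at most one sub-client of $I$, so the inputs actually presented to $\M_t$ under $D$ versus $D'$ differ in at most one sub-client, i.e.\ they are sub-client-adjacent in $\mathcal{Y}$. This converts one client-level change into a sequence of $k$ sub-client-level changes, one per round, so that $(\M_1,\dots,\M_k)$ is a $k$-fold adaptive composition of $(\epsilon,\delta)$-DP mechanisms. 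I would then invoke the advanced composition theorem of Kairouz et al., which yields $\big(\epsilon'(d),\,1-(1-\delta)^k(1-d)\big)$-DP for any $d\in[0,1]$ with $\epsilon'(d)$ equal to the stated minimum of the three expressions. Since this composed object is exactly the transcript of $\M|_{\mathcal{X}}$ on the client-adjacent pair $D,D'$, the client-level guarantee follows.

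The main obstacle is formalizing the reduction rigorously rather than the composition bound itself, which is imported from Kairouz et al. Specifically, the argument must accommodate adaptivity, since later rounds depend on the outputs and injected noise of earlier rounds; I would therefore rely on the adaptive form of advanced composition and state the access assumption precisely enough that ``one sub-client per client per round'' forces sub-client-adjacency of the per-round inputs for every realization of the interaction. A secondary subtlety is bookkeeping for replacement adjacency across the $k$ rounds: one must ensure that a consistent surrogate sub-client is the one varied in each round where client $I$ is queried, so that the endpoints of the round-by-round chain really are $D$ and $D'$ and no extra sub-client changes are inadvertently introduced.
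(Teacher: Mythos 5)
The paper offers no proof of this statement---it simply points to Theorem~3.4 of Kairouz et al.\ (2015)---so your proposal is, if anything, more complete than the source: you correctly identify the cited result as the advanced adaptive composition theorem and you supply the reduction (the access assumption means each round touches at most one sub-client of the replaced client, so the client-level change decomposes into a $k$-fold adaptive composition of sub-client-adjacent queries) that the paper leaves entirely implicit. Your charitable reading of the hypothesis as a per-round $(\epsilon,\delta)$ guarantee with $k$ the number of rounds is the only one under which the composition bound is applicable, and it matches the intended argument.
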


For the proof, refer to Theorem 3.4 in \cite{kairouz2015compo}.

\section{Relation between sample-level and client-level DP}
\label{app:sec:c}

\subsubsection{Sample-level DP algorithm}

We first give the details of the sample-level DP-SGD algorithm we are considering. 

\begin{algorithm}[!h]
\caption{DP-SGD algorithm in one round of training}\label{alg:dp_sgd}
\textbf{Input: } Learning rate $\eta_t$, noise scale $\sigma$, gradient norm bound $C$, batch size $K$, data sample$\{x_1,...,x_N\}$, initial model $\theta_0$
\begin{algorithmic}
\State $T \gets N/K$
    \For{$t\in[T]$}
    \State Collect a random batch $X$ of data by sampling with probability $K/N$
        \For{$i \in X$}
            \State $\textbf{g}_t(x_i) \gets \nabla_{\theta_t}L(\theta_t,x_i)$
            \State $\Bar{\textbf{g}}_t(x_i) \gets \textbf{g}_t(x_i)/\max\{1, \frac{\Vert \textbf{g}_t(x_i)\Vert_2}{c}\}$
        \EndFor
        \State $\tilde{\textbf{g}}_t \gets \frac{1}{K}\Big(\sum_i\bar{\textbf{g}}_t(x_i)+\mathcal{N}(0,z^2c^2\textbf{I})\Big)$
    \State $\theta_{t+1} \gets \theta_t - \eta \tilde{\textbf{g}}_t$
    \EndFor
\end{algorithmic}
\textbf{Output:} $\theta_T$
\end{algorithm}

Note that it is feasible to use a different value of $\eta_t$ for each descent. In this section, we will consider the algorithm with a uniform step size $\eta$, while the theorems naturally generalise to variable step sizes. \\

We first quote the theorem on privacy analysis of DP-SGD by Abadi et al.\cite{Abadi2016DLDP}. 

\begin{theorem}{(Moment accountant)}
    If $\sigma$ is chosen such that each step is $(\epsilon,\delta)$-DP to the batch, then the DP-SGD algorithm is $(\mathcal{O}(\epsilon\sqrt{\Tau},\delta)$-DP, where $\Tau$ is the number of descents taken.
\end{theorem}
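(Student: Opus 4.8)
The plan is to prove the bound through the \emph{moments accountant} technique of Abadi et al., which tracks the log moment generating function of the privacy loss random variable rather than invoking naive or advanced composition directly. For adjacent batches $D\sim D'$ (differing in one sample) and the per-step Gaussian mechanism $\M_t$ of Algorithm~\ref{alg:dp_sgd}, define the privacy loss at an outcome $o$ by $c_t(o)=\log\frac{\Pr[\M_t(D)=o]}{\Pr[\M_t(D')=o]}$, and define the $\lambda$-th log moment
\begin{equation*}
\alpha_t(\lambda)=\max_{D\sim D'}\log\,\E_{o\sim \M_t(D)}\bigl[\exp(\lambda\, c_t(o))\bigr].
\end{equation*}
The entire argument then reduces to three facts about this quantity, which I would establish in order: that the $\alpha_t$ compose additively, that a bound on the composed moment converts back to an $(\epsilon,\delta)$ guarantee, and that each per-step $\alpha_t$ is controlled quadratically in $\lambda$.

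For \textbf{composability}, I would show that the adaptive composition $\M=(\M_1,\dots,\M_\Tau)$ of the $\Tau$ descent steps satisfies $\alpha_{\M}(\lambda)\le\sum_{t=1}^{\Tau}\alpha_t(\lambda)$. This is the crucial improvement over naive composition and follows from the chain rule for the privacy loss together with the Markov property of the adaptive process, so that the moment generating function of the total privacy loss factorizes into nested conditional expectations, each bounded by $\exp(\alpha_t(\lambda))$. For the \textbf{tail bound}, a Chernoff/Markov estimate on the privacy loss random variable shows that $\M$ is $(\epsilon_\Tau,\delta_\Tau)$-DP whenever $\delta_\Tau\ge\exp\bigl(\alpha_{\M}(\lambda)-\lambda\epsilon_\Tau\bigr)$ for some $\lambda>0$, i.e. $\log(1/\delta_\Tau)\le\max_{\lambda>0}\bigl(\lambda\epsilon_\Tau-\alpha_{\M}(\lambda)\bigr)$.

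The remaining step is the per-step moment and the optimization over $\lambda$. Exploiting the Gaussian structure of $\M_t$ (not merely its black-box $(\epsilon,\delta)$ property, which alone would not yield a clean log-MGF bound), the log moment is quadratic, $\alpha_t(\lambda)\lesssim\lambda(\lambda+1)/\sigma^2$ for an effective noise-to-sensitivity ratio $\sigma$. Since the hypothesis that each step is $(\epsilon,\delta)$-DP forces $\sigma=\Omega\bigl(\epsilon^{-1}\sqrt{\log(1/\delta)}\bigr)$, we obtain $\alpha_t(\lambda)\lesssim\lambda^2\epsilon^2/\log(1/\delta)$, hence $\alpha_\M(\lambda)\lesssim\Tau\lambda^2\epsilon^2/\log(1/\delta)$ after composition. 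Fixing $\delta_\Tau=\Theta(\delta)$ and maximizing the quadratic $\lambda\epsilon_\Tau-a\lambda^2$ with $a\lesssim\Tau\epsilon^2/\log(1/\delta)$, its maximum $\epsilon_\Tau^2/(4a)$ must satisfy $\epsilon_\Tau^2/(4a)\ge\log(1/\delta_\Tau)$; the $\log(1/\delta)$ factors cancel and this reduces to $\epsilon_\Tau=\mathcal{O}(\sqrt{\Tau}\,\epsilon)$, which is exactly the claim.

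I expect the delicate part to be the per-step moment bound once subsampling is taken into account: the privacy loss is governed by a mixture distribution arising from sampling each record with rate $q=K/N$, so bounding $\E[\exp(\lambda c_t)]$ requires careful control of the binomial moments of the mixture (the $q^2\lambda(\lambda+1)/((1-q)\sigma^2)$-type estimate of Abadi et al.), and this bound is only valid for $\lambda$ in a restricted range. The technical crux is therefore verifying that the optimizing $\lambda^\ast$ from the previous paragraph lands inside this admissible range, so that the composition and tail-bound steps, which are otherwise routine, can be applied legitimately.
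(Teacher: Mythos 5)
Your proposal is correct and follows essentially the same route the paper relies on: the paper does not prove this theorem itself but quotes it from Abadi et al.\ (Theorem~1 of that work), whose proof is exactly the moments-accountant argument you outline — additive composition of the log moments of the privacy loss, a Chernoff-type tail bound converting the composed moment back to an $(\epsilon,\delta)$ guarantee, a quadratic per-step moment bound for the Gaussian mechanism, and optimization over $\lambda$. One remark: the paper explicitly takes the sampling rate to be $1$ (pre-determined batches rather than randomly sampled lots), so the subsampled-Gaussian mixture analysis you flag as the technical crux is moot in this setting — with $q=1$ the per-step log moment is exactly $\lambda(\lambda+1)/(2\sigma^2)$ and holds for all $\lambda$, so there is no admissible-range condition to verify.
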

For more details, refer to Theorem 1 in \cite{Abadi2016DLDP}.Note that the sampling rate is taken to be $1$ in this setting since instead of randomly sampling 'lots' as in the original paper, we use pre-determined bateches. \\

We then investigate the cumulation effect of sample-level Gaussian mechanism. 

\subsubsection{Definition and Lemmas}\

We first give a few definitions and lemmas on some useful properties.\\ 

\noindent\textbf{(A1)} A function $f(\bm{x})$ is $\mu$\textbf{-convex} if $\mu\geq 0$ and $f$ satisfies: 
$$\langle \nabla f(\bm{x}),\bm{y}-\bm{x}\rangle \leq -(f(\bm{x}) - f(\bm{y}) + \frac{\mu}{2} \Vert \bm{x}-\bm{y}\Vert^2), \text{ for any } \bm{x},\bm{y}.$$
And if $f$ is twice-differentiable, then this means $\Vert\nabla^2 f(\bm{x})\Vert\geq \mu, \forall \bm{x}$.\\
We say $f$ is $\mu$\textbf{-strongly convex} if $\mu>0$.\\

\noindent\textbf{(A2)} A function $f(\bm{x})$ is $\beta$\textbf{-smooth} if it satisfies: 
$$\Vert \nabla f(\bm{x})-\nabla f(\bm{y})\Vert \leq \beta \Vert \bm{x}-\bm{y}\Vert, \text{ for any } \bm{x},\bm{y}.$$
Note that this implies:
$$f(\bm{y})\leq f(\bm{x}) +\langle \nabla f(\bm{x}),\bm{y}-\bm{x}\rangle + \frac{\beta}{2} \Vert \bm{x}-\bm{y}\Vert^2$$
And if $f$ is twice-differentiable, then this means $\Vert\nabla^2 f(\bm{x})\Vert\leq \beta, \forall \bm{x}$.\\

\begin{lemma}
    For $x_t = ax_{t-1}+b,a\neq 1$, $x_0=0$, we have $x_t = \frac{a^t-1}{a-1}b$.
\end{lemma}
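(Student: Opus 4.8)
The plan is to prove the closed form by induction on $t$, since the recurrence $x_t = a x_{t-1} + b$ is first-order and linear with constant coefficients. First I would verify the base case at $t=0$: the recurrence initializes $x_0 = 0$, and the claimed formula evaluates to $\frac{a^0-1}{a-1}b = \frac{0}{a-1}b = 0$, which matches. It is worth noting at the outset that the hypothesis $a\neq 1$ is exactly what keeps the denominator $a-1$ nonzero, so the expression is well defined at every step.

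For the inductive step, I would assume the formula holds at step $t-1$, namely $x_{t-1} = \frac{a^{t-1}-1}{a-1}b$, and substitute into the recurrence to obtain $x_t = a\cdot \frac{a^{t-1}-1}{a-1}b + b$. The only real work is a routine algebraic simplification: placing everything over the common denominator $a-1$ gives $\frac{a^t - a}{a-1}b + b = \frac{a^t - a + (a-1)}{a-1}b = \frac{a^t-1}{a-1}b$, which is precisely the asserted form at step $t$. This closes the induction.

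As an alternative route, one could unroll the recurrence directly: iterating the relation yields $x_t = a^t x_0 + b\sum_{k=0}^{t-1} a^k$, and since $x_0 = 0$ the first term vanishes, leaving $b$ times a finite geometric series, which sums to $\frac{a^t-1}{a-1}$ precisely because $a\neq 1$. There is no substantial obstacle in either approach; the single point demanding a moment of care is confirming the identity $\frac{a^t-a}{a-1} + 1 = \frac{a^t-1}{a-1}$ in the inductive step, and observing that the assumption $a\neq 1$ is what licenses both the division and the geometric-sum formula. I would present the induction as the primary argument for self-containedness.
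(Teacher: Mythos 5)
Your proposal is correct and matches the paper's own argument: both proceed by induction on $t$, with the base case $x_0=0$ and the same algebraic simplification $a\cdot\frac{a^{t-1}-1}{a-1}b+b=\frac{a^t-1}{a-1}b$ in the inductive step. The geometric-series unrolling you mention as an alternative is fine but not needed; the primary induction is exactly what the paper does.
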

\begin{proof}
    We prove by induction. Firstly, the theorem holds trivially for $t=0$ or $1$. \\

    \noindent Now assume the statement holds for all $t\leq n$, then for $t=n+1$, we have:
    \begin{align*}
        x_{t}=ax_n+b &=a\times \frac{a^n-1}{a-1}b+b\\
        &= \Big(\frac{a^{n+1}-a}{a-1}+1\Big)b\\
        &= \frac{a^{t}-1}{a-1}b
    \end{align*}\qed
\end{proof}

\begin{lemma}(some technical lemmas)~
\begin{enumerate}
    \item $\E(AB)=\E(A)\E(B)\text{ for $A$, $B$ independent }$.\\
    \item (Cauchy-Schwarz inequality) $|\langle \bm{x}, \bm{y} \rangle| \leq \Vert \bm{x}\Vert \Vert \bm{y} \Vert$. \\
    \item (Generalized intermediate value theorem) For $f:X\to Y$ a continuous map and $E\subset X$ a connected subset, we have $f(E)$ is connected.\\
    \item If a function $f$ is $\beta$-smooth, then it is also continuous.
\end{enumerate}
\end{lemma}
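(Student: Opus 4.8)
The plan is to treat the four items as independent standard facts and dispatch each by its own classical argument, since they are collected here only as tools for the subsequent variance-divergence analysis. Items (1) and (2) are probabilistic and algebraic, item (3) is purely topological, and item (4) is where I would reuse item (2) together with the smoothness inequality already recorded under (A2).

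For item (1) I would invoke the definition of independence directly: the joint law of $(A,B)$ factorises as a product measure, so Fubini's theorem gives $\E(AB)=\iint ab\,dP_A(a)\,dP_B(b)=\bigl(\int a\,dP_A\bigr)\bigl(\int b\,dP_B\bigr)=\E(A)\E(B)$, under the usual integrability assumption. For item (2) I would run the standard discriminant argument: for every scalar $t$, $0\leq\Vert\bm{x}-t\bm{y}\Vert^2=\Vert\bm{x}\Vert^2-2t\langle\bm{x},\bm{y}\rangle+t^2\Vert\bm{y}\Vert^2$; viewing the right-hand side as a nonnegative quadratic in $t$ forces a nonpositive discriminant, that is $\langle\bm{x},\bm{y}\rangle^2\leq\Vert\bm{x}\Vert^2\Vert\bm{y}\Vert^2$, with the degenerate case $\bm{y}=\bm{0}$ handled trivially.

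For item (3) I would argue by contradiction using the separation characterisation of connectedness: if $f(E)$ were disconnected, write $f(E)\subseteq U\cup V$ with $U,V$ open, disjoint, and each meeting $f(E)$; then $f^{-1}(U)\cap E$ and $f^{-1}(V)\cap E$ are disjoint, nonempty, relatively open sets covering $E$, contradicting its connectedness. For item (4) I would combine the two-sided form of the (A2) inequality, namely $f(\bm{y})\leq f(\bm{x})+\langle\nabla f(\bm{x}),\bm{y}-\bm{x}\rangle+\tfrac{\beta}{2}\Vert\bm{x}-\bm{y}\Vert^2$ and its companion with $\bm{x},\bm{y}$ swapped, then bound the inner product by Cauchy--Schwarz (item 2) to obtain $|f(\bm{y})-f(\bm{x})|\leq\Vert\nabla f(\bm{x})\Vert\,\Vert\bm{y}-\bm{x}\Vert+\tfrac{\beta}{2}\Vert\bm{x}-\bm{y}\Vert^2$, which vanishes as $\bm{y}\to\bm{x}$, giving continuity.

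The only genuinely delicate point, and hence the step I would be most careful about, is item (4): the definition of $\beta$-smoothness in (A2) presupposes that $\nabla f$ exists pointwise, so I would first make explicit that differentiability is built into the hypothesis, after which the continuity bound above is immediate and the apparent ordering dependence on item (2) is harmless. The remaining three items require no more than their textbook proofs.
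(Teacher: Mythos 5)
Your four arguments are all correct, but there is nothing in the paper to compare them against: the paper explicitly states ``These are standard results, and the proofs are omitted,'' so any textbook derivation suffices here. Your Fubini argument for (1), the discriminant argument for (2), and the preimage-separation argument for (3) are exactly the canonical proofs. On (4), you correctly flag the one point worth being careful about --- the definition of $\beta$-smoothness in (A2) already presupposes that $\nabla f$ exists everywhere, so continuity of $f$ is immediate from differentiability and the quadratic-bound machinery is not strictly needed. One small wrinkle in your stated bound: the ``companion with $\bm{x},\bm{y}$ swapped'' produces a lower bound involving $\nabla f(\bm{y})$ rather than $\nabla f(\bm{x})$, so to write $|f(\bm{y})-f(\bm{x})|\leq\Vert\nabla f(\bm{x})\Vert\,\Vert\bm{y}-\bm{x}\Vert+\tfrac{\beta}{2}\Vert\bm{x}-\bm{y}\Vert^2$ you must first invoke the Lipschitz-gradient inequality $\Vert\nabla f(\bm{y})\Vert\leq\Vert\nabla f(\bm{x})\Vert+\beta\Vert\bm{x}-\bm{y}\Vert$, which only changes the constant on the quadratic term and does not affect the conclusion. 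This is cosmetic, not a gap.
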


These are standard results, and the proofs are omitted. \\

\subsubsection{Cumulation of sample-level Gaussian Mechanism}

\begin{theorem}{(Cumulative Sensitivity)}
    For the DP-SGD algorithm as described above, we have $\mathcal{S}(\theta_t)\leq 2\eta tc$. 
\end{theorem}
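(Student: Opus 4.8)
The plan is to read $\mathcal{S}(\theta_t)$ as the worst-case distance between the two parameter trajectories that Algorithm~\ref{alg:dp_sgd} produces on a pair of adjacent (replacement) datasets, and to control that distance by a linear recursion whose per-step increment is capped by the gradient clipping. Since sensitivity concerns the underlying data-to-output map, I would work with the deterministic descent trajectory (equivalently, couple the per-step Gaussian noise between the two runs so it cancels). Concretely, fix $D\sim D'$ differing in a single sample, run the algorithm from the same public initialization $\theta_0=\theta_0'$, and set $\Delta_t:=\|\theta_t-\theta_t'\|$, so that $\Delta_0=0$. From the update $\theta_{t+1}=\theta_t-\eta\tilde{\mathbf g}_t$ the difference obeys
\begin{equation*}
\theta_{t+1}-\theta_{t+1}' = (\theta_t-\theta_t') - \frac{\eta}{K}\sum_{i\in X}\big(\bar{\mathbf g}_t(x_i)-\bar{\mathbf g}_t'(x_i)\big).
\end{equation*}

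The central step is to bound the increment \emph{independently of} $\Delta_t$, and this is exactly where the clipping is used. The algorithm guarantees $\|\bar{\mathbf g}_t(x_i)\|\le c$ and $\|\bar{\mathbf g}_t'(x_i)\|\le c$ for \emph{every} index $i$ in the batch, so by the triangle inequality each summand satisfies $\|\bar{\mathbf g}_t(x_i)-\bar{\mathbf g}_t'(x_i)\|\le 2c$. Summing the $K$ terms and multiplying by $\eta/K$ gives $\|\frac{\eta}{K}\sum_{i\in X}(\bar{\mathbf g}_t(x_i)-\bar{\mathbf g}_t'(x_i))\|\le 2\eta c$, and a final triangle inequality on the displayed identity yields the recursion $\Delta_{t+1}\le \Delta_t + 2\eta c$. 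This telescopes from $\Delta_0=0$ to $\Delta_t\le 2\eta t c$; taking the supremum over adjacent pairs gives $\mathcal{S}(\theta_t)\le 2\eta t c$. The recursion is the degenerate $a=1$ case of the preceding recursion lemma, settled by elementary summation.

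The subtle point — the one place to resist being clever — is precisely this per-step bound. One might hope that, because $D$ and $D'$ agree on all but one sample, only a single summand differs and a factor $1/K$ survives, giving a much smaller increment. This fails for $t\ge 1$: the trajectories have already drifted apart, so $\theta_t\neq\theta_t'$ and \emph{every} clipped gradient $\bar{\mathbf g}_t(x_i)$ may differ from $\bar{\mathbf g}_t'(x_i)$ even on shared samples, since each is evaluated at a different parameter. What rescues the argument is that clipping caps each of the $K$ differences by $2c$ \emph{uniformly}, so the $1/K$ normalization exactly cancels the $K$ summands and the increment stays at $2\eta c$ no matter how far the trajectories have separated. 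I would emphasize that this uniform, non-amplifying cap is also what distinguishes the linear sensitivity growth here from the exponential variance growth established later: clipped gradients cannot amplify an existing gap between trajectories, whereas injected noise fed through the (possibly expansive) gradient map can.
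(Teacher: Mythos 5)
Your proof is correct and follows essentially the same route as the paper's: a per-step triangle-inequality recursion in which the clipping bound caps each of the $K$ clipped-gradient differences by $2c$ uniformly (so the $1/K$ normalization cancels the $K$ summands), giving an increment of $2\eta c$ per step that telescopes to $2\eta t c$. The paper phrases this via subadditivity of the sensitivity $\mathcal{S}(\cdot)$ rather than by tracking two explicit coupled trajectories, but the content is identical; your added remark on why all $K$ summands (not just the replaced one) must be bounded is a useful clarification the paper leaves implicit.
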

\begin{proof}
Note that $\mathcal{S}(\theta_0)=0$ and
\begin{align*}
    \mathcal{S}(\theta_t) &= \mathcal{S}(\theta_{t-1}-\eta \tilde{\textbf{g}}_{t-1})\\
    &= \max_{d\sim d'}\{(\theta_{t-1}-\eta \tilde{\textbf{g}}_{t-1})|_d-(\theta_{t-1}-\eta \tilde{\textbf{g}}_{t-1})|_{d'}\}\\
    &\leq \max_{d\sim d'}\{\theta_{t-1}|_d-\theta_{t-1}|_{d'}\}+\max_{d\sim d'}\{\eta\tilde{\textbf{g}}_{t-1}|_d-\eta\tilde{\textbf{g}}_{t-1}|_{d'}\}\\
    &= \mathcal{S}(\theta_{t-1})+\eta \mathcal{S}(\tilde{\textbf{g}}_{t-1})
\end{align*}
while
\begin{align*}
    \mathcal{S}(\tilde{\textbf{g}}_{t-1}) &=\mathcal{S}\Big(\frac{1}{K}\Big(\sum_{i\in[K]}\bar{\textbf{g}}_{t-1}(x_i)+N_t\Big)\Big)
     =\mathcal{S}\Big(\frac{1}{K}\Big(\sum_{i\in[K]}\bar{\textbf{g}}_{t-1}(x_i)\Big)\Big)\\
    & \leq \frac{1}{K} (\sum_{i\in[K]}\mathcal{S}(\bar{\textbf{g}}_{t-1}(x_i))\Big) =2c
\end{align*}
It follows that $\mathcal{S}(\theta_t)\leq 2\eta tc$. \qed 

\end{proof}

\begin{theorem}{(Cumulative Variance)}
    For $\mu$-convex and $\beta$-smooth loss function $L$ with $\Vert\nabla L\Vert\leq c$, the variance between the noisy model and the original model diverges with rate $\mathcal{O}\Big((1-2\eta\beta+\eta^2\mu^2)^{\Tau} \Big)$, where $\Tau$ is the number of descents taken. 
\end{theorem}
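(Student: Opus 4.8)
The plan is to track the error process $e_t := \tilde{\theta}_t - \theta_t$, the gap between the parameters of the noisy DP-SGD iteration and those of the \emph{identical} iteration with the Gaussian perturbation removed, both launched from the same $\theta_0$ so that $e_0 = \bm{0}$. Writing both updates with the same mini-batch at each step and subtracting, and abbreviating the DP noise injected at step $t$ by $\zeta_t\sim\mathcal{N}(\bm 0, \tfrac{z^2c^2}{K^2}\bm I)$, I would first establish the exact one-step recursion
\begin{equation*}
  e_{t+1} = e_t - \eta\big(\nabla L(\tilde\theta_t) - \nabla L(\theta_t)\big) - \eta\,\zeta_t .
\end{equation*}
The assumption $\Vert\nabla L\Vert\le c$ guarantees that clipping is never active, so the clipped gradients equal the true gradients and the identity holds without error terms.

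Next I would pass to the expected squared deviation $\E\Vert e_{t}\Vert^2$, which plays the role of the ``variance'' in the statement. Setting $u_t := e_t - \eta(\nabla L(\tilde\theta_t)-\nabla L(\theta_t))$ and expanding $\Vert e_{t+1}\Vert^2 = \Vert u_t\Vert^2 - 2\eta\langle u_t,\zeta_t\rangle + \eta^2\Vert\zeta_t\Vert^2$, the cross term vanishes in expectation: $u_t$ is a function of $\zeta_0,\dots,\zeta_{t-1}$ only, hence independent of the zero-mean $\zeta_t$, so the independence lemma gives $\E\langle u_t,\zeta_t\rangle = 0$. This leaves $\E\Vert e_{t+1}\Vert^2 = \E\Vert u_t\Vert^2 + \eta^2\,\E\Vert\zeta_t\Vert^2$, where $\E\Vert\zeta_t\Vert^2 = d\,z^2c^2/K^2$ is a fixed positive constant.

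The heart of the argument is to lower-bound $\Vert u_t\Vert^2$ by $(1-2\eta\beta+\eta^2\mu^2)\Vert e_t\Vert^2$. Expanding
\begin{equation*}
  \Vert u_t\Vert^2 = \Vert e_t\Vert^2 - 2\eta\big\langle e_t,\nabla L(\tilde\theta_t)-\nabla L(\theta_t)\big\rangle + \eta^2\big\Vert\nabla L(\tilde\theta_t)-\nabla L(\theta_t)\big\Vert^2 ,
\end{equation*}
I would bound the two inner pieces separately. $\beta$-smoothness together with Cauchy--Schwarz yields $\langle e_t,\nabla L(\tilde\theta_t)-\nabla L(\theta_t)\rangle \le \beta\Vert e_t\Vert^2$, so the middle term is at least $-2\eta\beta\Vert e_t\Vert^2$; and $\mu$-strong convexity combined with Cauchy--Schwarz gives $\Vert\nabla L(\tilde\theta_t)-\nabla L(\theta_t)\Vert\ge\mu\Vert e_t\Vert$, so the last term is at least $\eta^2\mu^2\Vert e_t\Vert^2$. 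Collecting these produces the scalar recursion
\begin{equation*}
  \E\Vert e_{t+1}\Vert^2 \ge (1-2\eta\beta+\eta^2\mu^2)\,\E\Vert e_t\Vert^2 + \eta^2\,\E\Vert\zeta_t\Vert^2 .
\end{equation*}

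Finally, writing $a = 1-2\eta\beta+\eta^2\mu^2$ and $b=\eta^2\E\Vert\zeta_t\Vert^2$ and noting $a\ge0$ for the step sizes of interest, a short induction dominates this inequality by the equality recursion $y_t = a y_{t-1}+b$ with $y_0=0$; the linear-recursion lemma then gives $\E\Vert e_\Tau\Vert^2 \ge \frac{a^\Tau-1}{a-1}\,b$, whose $\Tau$-dependence is governed by $a^\Tau$, i.e.\ the claimed $\mathcal{O}\big((1-2\eta\beta+\eta^2\mu^2)^\Tau\big)$ rate. I expect the curvature step to be the main obstacle: one must legitimately convert the \emph{vector} difference $\nabla L(\tilde\theta_t)-\nabla L(\theta_t)$ into the scalar factors $\beta$ and $\mu^2$. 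The first-order (co-coercivity) inequalities above do this directly; the alternative average-Hessian route $\nabla L(\tilde\theta_t)-\nabla L(\theta_t)=\big(\int_0^1\nabla^2 L(\theta_t+s e_t)\,ds\big)e_t$ needs the continuity of $\nabla L$ and the intermediate value theorem to certify that the effective curvature lies in $[\mu,\beta]$, which is exactly what the technical lemmas supply.
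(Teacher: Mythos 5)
Your proposal is correct and follows essentially the same route as the paper's proof: the same one-step recursion for $\tilde\theta_t-\theta_t$, the same elimination of the noise cross-term by independence, the same lower bounds $-2\eta\beta\Vert e_t\Vert^2$ (via Cauchy--Schwarz and $\beta$-smoothness) and $\eta^2\mu^2\Vert e_t\Vert^2$ (via $\mu$-convexity) on the two curvature terms, and the same linear-recursion lemma to obtain the $(1-2\eta\beta+\eta^2\mu^2)^{\Tau}$ rate. The only cosmetic difference is that you work with the batch gradient directly, whereas the paper keeps the per-sample sum and invokes the generalized intermediate value theorem to collapse it to a single representative sample before applying smoothness.
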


\begin{proof}

Let $\tilde{\theta}_{t}$ denote the parameters of the noisy model, and $\theta_t$ denote the parameters of the original model. \\

\noindent Consider
    $$\tilde{\theta}_{t+1} = \tilde{\theta}_t - \frac{\eta}{K}\Big(\sum_{i\in[K]}\nabla L(\Tilde{\theta}_t;x_i)+N_t \Big)$$
    $$\theta_{t+1} = \theta_t - \frac{\eta}{K}\Big(\sum_{i\in[K]}\nabla L(\theta_t;x_i) \Big)$$
    where $N_t\sim \mathcal{N}(0,\sigma^2\textbf{I})$ is the Gaussian noise added, so 
    $$\tilde{\theta}_{t+1}-\theta_{t+1} = \tilde{\theta}_t-\theta_t - \frac{\eta}{K}\sum_{i\in[K]}[\nabla L(\Tilde{\theta}_t;x_i)-\nabla L(\theta_t;x_i)]+\frac{\eta}{K}N_t $$
    Consider the variance, where the expectation is taken over all noise, given data sample $\{x_1,...,x_K\}$:
    \begin{align*}
        \E\Vert\Tilde{\theta}_{t+1}-\theta_{t+1}\Vert^2 &= \E\Vert \tilde{\theta}_t-\theta_t - \frac{\eta}{K}\sum_{i\in[K]}[\nabla L(\Tilde{\theta}_t;x_i)-\nabla L(\theta_t;x_i)]+\frac{\eta}{K}N_t \Vert^2\\
        &= \E\Vert \tilde{\theta}_t-\theta_t - \frac{\eta}{K}\sum_{i\in[K]}[\nabla L(\Tilde{\theta}_t;x_i)-\nabla L(\theta_t;x_i)]\Vert^2 + \E\Vert\frac{\eta}{K}N_t \Vert^2\\
        &= \E\Vert \tilde{\theta}_t-\theta_t\Vert^2 + \frac{\eta^2}{K^2}\E\Vert\sum_{i\in[K]}[\nabla L(\Tilde{\theta}_t;x_i)-\nabla L(\theta_t;x_i)]\Vert^2 \\ & \quad + \frac{\eta^2}{K^2}\E\Vert N_t \Vert^2+\mathcal{C}
    \end{align*}
    where the second equality follows from Lemma 2.1 and 
    \begin{align*}
        \mathcal{C} &= -\frac{2\eta}{K}\E \Big\langle \tilde{\theta}_t-\theta_t, \sum_{i\in[K]}[\nabla L(\Tilde{\theta}_t;x_i)-\nabla L(\theta_t;x_i)] \Big\rangle \\
        &\geq -\frac{2\eta}{K} \E \Big(\Vert \tilde{\theta}_t-\theta_t\Vert\Vert \sum_{i\in[K]}[\nabla L(\Tilde{\theta}_t;x_i)-\nabla L(\theta_t;x_i)]\Vert \Big)\\       
        &\geq -2\eta\beta\ \E\Vert \tilde{\theta}_t-\theta_t\Vert^2
    \end{align*}
    where the second inequality follows from Lemma 2.2 and the third inequality is due to $\beta$-smoothness, Lemma 2.4, and by Lemma 2.3, there exists $x_c $ s.t. 
    \begin{align}
        \Vert\sum_{i\in[K]}[\nabla L(\Tilde{\theta}_t;x_i) - \nabla L(\theta_t;x_i)]\Vert &= K\Vert\nabla L(\Tilde{\theta}_t;x_c) - \nabla L(\theta_t;x_c)\Vert\\
        &\leq K\beta \Vert \tilde{\theta}_t-\theta_t\Vert 
    \end{align}
    where the inequality follows from the $\beta$-smoothness of $L$.\\
    
    \noindent Therefore 
    \begin{align*}
        \E\Vert\Tilde{\theta}_{t+1}-\theta_{t+1}\Vert^2 &\geq 
        \E\Vert \tilde{\theta}_t-\theta_t\Vert^2 + \eta^2\mu^2\E\Vert\Tilde{\theta}_{t}-\theta_{t}\Vert^2 + \frac{\eta^2\sigma^2}{K^2}-2\eta\beta\ \E\Vert \tilde{\theta}_t-\theta_t\Vert^2\\
        &= (1-2\eta\beta+\eta^2\mu^2)\E\Vert \tilde{\theta}_t-\theta_t\Vert^2 + \frac{\eta^2\sigma^2}{K^2} \\
        &= \frac{[(1-2\eta\beta+\eta^2\mu^2)^{t+1}-1]\eta^2\sigma^2}{(\eta^2\mu^2-2\eta\beta)K^2}
    \end{align*}
    where for the first inequality, we apply (4) again with $\mu$-convexity and substitute in the values for $\E\Vert N_t \Vert^2$ and $\mathcal{C}$, and the second equality follows from Lemma 1. 
    \qed
    
\end{proof}

\section{Additional Experiments}
\label{app:sec:d}
\subsection{Comparison with FedAvg under different training settings.}
\subsubsection{Subsampling.}
We consider the scenario where some clients may not join at every round, by further comparing our methods with FedAvg using subsampling. The subsampling means at each round, some clients may not be online, only a subset of clients are involved. We perform the experiments on the Prostate MRI segmentation and set the subsampling ratio as 2/3. The results are shown in Table~\ref{app:tab:subsample}, from the table it can be observed that the subsampling may harm the performance, while our method consistently outperforms the compared methods.

\begin{table}[!tp]
\centering
\caption{Performance comparison of FedAvg with subsampling.}
\label{app:tab:subsample}
\setlength{\tabcolsep}{5pt}
\begin{tabular}{l c c c c c c }
\toprule

\multirow{2}{*}{Method} & \multicolumn{2}{c}{$z=0.3$}  &\multicolumn{2}{c}{$z=0.5$} & \multicolumn{2}{c}{$z=0.7$} \\ 
 \cmidrule{2-7} & Dice $\uparrow$ & {IoU} $\uparrow$ & {Dice} $\uparrow$  & {IoU} $\uparrow$ & {Dice} $\uparrow$& {IoU} $\uparrow$ \\ \hline
DP-FedAvg~\cite{brendan2018learning} & $41.43$ &          $29.28$                  &    $22.45$                      &      $13.50$                     &    $13.59$               &           $7.41$                 \\
DP-FedAvg (subsample)& 40.90 & 28.34 & 16.54	& 9.38& 13.24& 7.13        \\
+Ours & $70.59$ & {$67.72$} & {$63.28$} & {$61.12$} &  {$58.14$} & {$56.18$}\\
\bottomrule
\end{tabular}
\end{table}

\begin{table}[!tp]
\centering
\caption{Performance comparison with more training rounds.}
\label{app:tab:more_rounds}
\setlength{\tabcolsep}{5pt}
\begin{tabular}{l c c c c c c }
\toprule

\multirow{2}{*}{Method} & \multicolumn{2}{c}{$z=0.3$}  &\multicolumn{2}{c}{$z=0.5$} & \multicolumn{2}{c}{$z=0.7$} \\ 
 \cmidrule{2-7} & Dice $\uparrow$ & {IoU} $\uparrow$ & {Dice} $\uparrow$  & {IoU} $\uparrow$ & {Dice} $\uparrow$& {IoU} $\uparrow$ \\ \hline
DP-FedAvg~\cite{brendan2018learning} & $41.43$ &          $29.28$                  &    $22.45$                      &      $13.50$                     &    $13.59$               &           $7.41$                 \\
+Ours & $70.59$ & {$67.72$} & {$63.28$} & {$61.12$} &  {$58.14$} & {$56.18$}\\
\hline
DP-FedAvg (300 rounds)~\cite{brendan2018learning} & 57.65 & 44.17 & 30.04 & 19.49 & 18.84 & 10.92 \\
+Ours (300 rounds)  &  82.13	 & 73.86 & 75.98 & 65.84 & 76.96 & 68.41 \\
\bottomrule
\end{tabular}
\end{table}

\subsubsection{More training rounds}
We investigate more training rounds to validate DP accuracy deficits recovery. Specifically, we further extend training to 300 rounds, considering three privacy levels on the segmentation tasks, the results are shown in Table~\ref{app:tab:more_rounds}. From the table, it can be observed that, with more training rounds, both DP-FedAvg and our method have performance improvements, which indicates the effects of DP accuracy deficits recovery. However, our method still outperforms the DP-FedAvg, showing potential for overcoming the noisy gradients. Furthermore, it is noted that more training rounds will require larger privacy budgets, increasing overall privacy leakage risk, which is a trade-off between performance and privacy.

\subsubsection{Higher communication frequency}
We perform a comparison with more frequent aggregation to investigate how the performance-privacy trade-off changes. We have increased the communication frequency of DP-FedAvg by three times and present the results in Table~\ref{app:tab:high_freq}. From the results, we can see the Dice scores on prostate MRI data are 55.40, 31.46, and 32.22 for 3 privacy levels. Despite the increased frequency, our method still clearly outperforms the DP-FedAvg method. In addition, the increasing frequency will also incur a higher communication cost and risk of privacy leakage.

\begin{table}[!tp]
\centering
\caption{Performance comparison with higher communication frequency.}
\label{app:tab:high_freq}
\setlength{\tabcolsep}{5pt}
\begin{tabular}{l c c c c c c }
\toprule

\multirow{2}{*}{Method} & \multicolumn{2}{c}{$z=0.3$}  &\multicolumn{2}{c}{$z=0.5$} & \multicolumn{2}{c}{$z=0.7$} \\ 
 \cmidrule{2-7} & Dice $\uparrow$ & {IoU} $\uparrow$ & {Dice} $\uparrow$  & {IoU} $\uparrow$ & {Dice} $\uparrow$& {IoU} $\uparrow$ \\ \hline
DP-FedAvg~\cite{brendan2018learning} & $41.43$ &          $29.28$                  &    $22.45$                      &      $13.50$                     &    $13.59$               &           $7.41$                 \\
DP-FedAvg~\cite{brendan2018learning} (high frequency) & 55.40 & 42.26 & 31.46 & 20.50 & 32.22 & 21.12\\
+Ours & $70.59$ & {$67.72$} & {$63.28$} & {$61.12$} &  {$58.14$} & {$56.18$}\\

\bottomrule
\end{tabular}
\end{table}

\subsection{Privacy protection under attacks.}
To investigate the actual effects of privacy protection, we further deploy the gradient inversion attack on models, which aims to recover the original images. We adopt the average structural similarity (SSIM) between the original image and the attack reconstructed one as the metric. The results are shown in Table~\ref{app:tab:ssim}. We perform the attack on all prostate samples, and the average SSIMs are 5e-2, 1e-2, 7e-3 for DP-FedAvg with three DP levels, while the average SSIMs of our are 1e-2, 1e-2, 1e-2. Our method effectively against the attack while maintaining a higher performance at the same time.

\begin{table}[!tp]
\centering
\caption{Performance comparison with higher communication frequency.}
\label{app:tab:ssim}
\setlength{\tabcolsep}{5pt}
\begin{tabular}{l c c c }
\toprule

\multirow{1}{*}{Method} & \multicolumn{1}{c}{$z=0.3$}  &\multicolumn{1}{c}{$z=0.5$} & \multicolumn{1}{c}{$z=0.7$} \\ \hline
DP-FedAvg~\cite{brendan2018learning} &  5e-2 &  1e-2 &  7e-3          \\
+Ours & 1e-2 &  1e-2 &  1e-2\\
\bottomrule
\end{tabular}
\end{table}

\end{document}